\theoremstyle{plain}
\newtheorem{theorem}{Theorem}[section]
\newtheorem{lemma}[theorem]{Lemma}
\theoremstyle{definition}
\newtheorem{definition}[theorem]{Definition}
\newtheorem{assumption}[theorem]{Assumption}
\theoremstyle{remark}
\newtheorem{example}[theorem]{Example}
\DeclareMathOperator{\samplespace}{\Omega}
\DeclareMathOperator{\prob}{\mathbb{P}}
\DeclareMathOperator{\expected}{\mathbb{E}}
\DeclareMathOperator{\bpa}{m}
\DeclareMathOperator{\featspace}{\mathcal{X}}
\DeclareMathOperator{\labelspace}{\mathcal{Y}}
\DeclareMathOperator{\bel}{bel}
\DeclareMathOperator{\pl}{pl}
\DeclareMathOperator{\yagerq}{q}
\DeclareMathOperator{\predbpa}{\tilde{m}}
\DeclareMathOperator{\nn}{NN}
\DeclareMathOperator{\measures}{\mathcal{M}}
\DeclareMathOperator{\credal}{\mathcal{C}}
\DeclareMathOperator{\borel}{\mathcal{B}}
\DeclareMathOperator{\risk}{\mathcal{R}}
\DeclareMathOperator{\loss}{\mathcal{L}}
\DeclareMathOperator{\rate}{r}
\DeclareMathOperator{\err}{err}
\title{Partial-Label Learning with a Reject Option}
\author{%
    \name Tobias Fuchs \email tobias.fuchs@kit.edu \\
    \addr Karlsruhe Institute of Technology, Germany \\
    \AND
    \name Florian Kalinke \email florian.kalinke@kit.edu \\
    \addr Karlsruhe Institute of Technology, Germany \\
    \AND
    \name Klemens Böhm \email klemens.boehm@kit.edu \\
    \addr Karlsruhe Institute of Technology, Germany
}
\begin{document}

\maketitle

\begin{abstract}
    In real-world applications, one often encounters ambiguously labeled data, where different annotators assign conflicting class labels.
    Partial-label learning allows training classifiers in this weakly supervised setting, where state-of-the-art methods already show good predictive performance.
    However, even the best algorithms give incorrect predictions, which can have severe consequences when they impact actions or decisions.
    We propose a novel risk-consistent nearest-neighbor-based partial-label learning algorithm with a reject option, that is, the algorithm can reject unsure predictions.
    Extensive experiments on artificial and real-world datasets show that our method provides the best trade-off between the number and accuracy of non-rejected predictions when compared to our competitors, which use confidence thresholds for rejecting unsure predictions.
    When evaluated without the reject option, our nearest-neighbor-based approach also achieves competitive prediction performance.
\end{abstract}

%
%

\section{Introduction}
Real-world data is often noisy, for example, human annotators might assign different class labels to the same instance.
In partial-label learning (PLL; \citealt{HullermeierB06,LiuD12,ZhangY15a,0009LLQG23}), training instances can have multiple labels, known as candidates, of which only one is correct.
While in some cases it is possible to sanitize such data, cleaning is costly, especially for large-scale datasets.
Instead, one wants to predict the class labels of unseen instances having sets of candidates only, that is, without knowing the exact ground-truth labels of the training data.
PLL algorithms allow for handling such ambiguously labeled data.

However, even the best algorithms give incorrect predictions.
These errors can have severe consequences when they impact actions or decisions.
Consider, for example, safety-critical domains such as the classification of medical images~\citep{YangWMLC09,LambrouPG11,SengeBDHHDH14,KendallG17,ReamaroonSLIN19} or the control of self-driving cars~\citep{XuPWD14,VarshneyA17,HubmannBALS17,ShafaeiKOK18,MichelmoreWLCGK20}.
One option to limit fallacies is to employ so-called \emph{reject options}, which allow one to abstain from certain predictions if unsure and, instead, let humans decide on the label of an instance or the actions to take \citep{MozannarLWSDS23}.
Naturally, there is a trade-off arising between the number and accuracy of non-rejected predictions.
In the supervised setting, reject options have already been studied, both, for multi-class classification \citep{CharoenphakdeeC21,CaoC0GG00S22,MaoM024,NarasimhanMJGK24} and regression tasks \citep{ZaouiDH20,ChengCWW0F23}.

In the weakly supervised PLL setting, obtaining sensible reject options is more challenging than in the supervised case as ground truth is not available.
Still, a reject option allows for mitigating misclassifications and improving prediction quality.
Determining whether to reject a prediction of a PLL algorithm is difficult as the uncertainty involved in prediction-making originates from multiple sources \citep{hora1996aleatory,SaleCH23,WimmerSHBH23}.
On the one hand, there is inherent uncertainty due to ambiguously labeled data (\emph{aleatoric uncertainty}).
This ambiguity is influenced by both instances and classes:
In some regions of the instance space, annotators are more likely to mislabel instances than in others.
Additionally, some class labels are inherently more similar than others.
On the other hand, there is uncertainty due to the lack of knowledge regarding the relevance of each candidate label (\emph{epistemic uncertainty}).
To the best of our knowledge, we are the first to study reject options in the context of PLL.

We propose a novel partial-label learning approach based on Dempster-Shafer theory (DST; \citealt{Dempster08a,Shafer86}).
In contrast to existing PLL approaches, which use point estimates of the candidate label weights \citep{HullermeierB06,LiuD12,ZhangY15a,NiZDCL21,0009LLQG23}, our method maintains a feasible region, also known as \emph{credal set}.
Maintaining such a credal set is beneficial in assessing the certainty of predictions as our experiments show.

\textbf{Contributions.}
We summarize our contributions as follows.
\begin{itemize}
    \item \emph{Algorithm with reject option.} We introduce \textsc{Dst-Pll}, a novel nearest-neighbor-based partial-label learning algorithm with a reject option that learns from ambiguously labeled data.
          The algorithm effectively determines whether to reject a prediction and provides the best trade-off between the number and accuracy of non-rejected predictions when compared to the state-of-the-art PLL methods.
    \item \emph{Experiments.} Extensive experiments on artificial and real-world data support our claims.
          We make our code and data openly available.\footnote{\label{fnt:code}\url{https://github.com/mathefuchs/pll-with-a-reject-option}.}
    \item \emph{Theoretical analysis.} We analyze \textsc{Dst-Pll}, give a closed-form expression of its expected decision boundary under mild assumptions, and prove its risk consistency.
          The runtime analysis shows that the proposed method's runtime is dominated by $k$-nearest-neighbor search, which has an average time complexity of $\mathcal{O}(dk \log n)$, with $d$ features and $n$ training instances.
\end{itemize}

\textbf{Structure of the paper.}
We discuss related work in Section~\ref{sec:related-work}, define the problem setting in Section~\ref{sec:pll-reject}, and propose our method in Section~\ref{sec:main}.
Section~\ref{sec:experiments} features experiments and Section~\ref{sec:conclusions} concludes.
All proofs and additional experiments are in the appendices.

\section{Related Work}
\label{sec:related-work}
Imperfect data often renders the application of supervised methods challenging.
Weakly supervised learning tackles this setting and encompasses a variety of problem formulations \citep{Bylander94,HadyS13,IshidaNMS19} including PLL.
We discuss related work regarding PLL in Section~\ref{sec:rw-pll}.
Section~\ref{sec:rw-unc} elaborates on related work in dealing with uncertainty in ML through the lens of reject options.

\subsection{Partial-Label Learning}
\label{sec:rw-pll}
The early PLL approaches transfer common supervised learning frameworks to the PLL context:
\citet{grandvalet2002logistic} proposes a logistic regression formulation,
\citet{JinG02} propose an expectation-maximization strategy,
\citet{HullermeierB06} propose a $k$-nearest-neighbors method,
\citet{NguyenC08} propose an extension of support-vector machines,
and \citet{CourST11} introduce an average loss formulation allowing for the use of any supervised method.
However, these approaches (i) cannot model the relevance of candidate labels in the labeling process or (ii) are not robust to non-uniform noise in the candidate sets.
We emphasize that (i) and (ii) stem from two different sources of uncertainty: (i) arises from the lack of knowledge about the candidate label relevancies of the PLL model (epistemic uncertainty) and (ii) arises from the random label noise in the candidate sets and is inherent to the PLL problem (aleatoric uncertainty).

More recent approaches address (i) and (ii) in the following ways:
\citet{ZhangY15a,ZhangZL16,XuLG19,WangLZ19,FengA19,NiZDCL21} leverage ideas from representation learning \citep{BengioCV13},
\citet{YuZ17,WangLZ19,FengA19,NiZDCL21} extend the maximum-margin idea,
\citet{LiuD12,LvXF0GS20} propose extensions of the expectation-maximization strategy,
\citet{ZhangYT17,TangZ17,WuZ18} propose stacking and boosting ensembles for more robust prediction-making,
\citet{LvXF0GS20,CabannesRB20} introduce a minimum loss formulation to iteratively remove noise,
and \citet{FengL0X0G0S20,LvXF0GS20,XuQGZ21,WangXLF0CZ22,HeFLLY22,0009LLQG23,fan2024kmt} employ deep-learning techniques such as auto-encoders, data augmentation, and transformers in the context of PLL.
These methods address (i) and (ii), featuring already good performance.
However, considering a reject option of those methods based on a confidence threshold, as proposed in the supervised setting \citep{NiCHS19}, yields sub-optimal results as all of the listed methods mix uncertainty stemming from (i) and (ii).
We claim that keeping both separate provides a better trade-off for the number and accuracy of non-rejected predictions (Section~\ref{sec:experiments}).

In the PLL context, we consider \citep{HullermeierB06,ZhangY15a,ZhangZL16,XuLG19,WangLZ19} as most closely related to our proposed method as these approaches consider an instance's neighborhood to infer its class label.
To the best of our knowledge, we are the first to propose an extension of the $k$-NN classifier that keeps uncertainty from (i) and (ii) separate by leveraging Dempster-Shafer theory.

\subsection{Reject Options}
\label{sec:rw-unc}
Recently, much attention has been given to the study of reject options \citep{MozannarLWSDS23,MaoM024,NarasimhanMJGK24}.
A reject option allows one to abstain from predictions and defer them to humans rather than making uncertain and possibly harmful decisions.

There are two common strategies in rejecting predictions in the supervised setting: The confidence-based and the classifier-rejector approach \citep{NiCHS19,CaoC0GG00S22}.
The confidence-based strategy uses a threshold on the models' confidence in order to accept or reject predictions.
Common model choices for quantifying the confidences are Bayesian methods \citep{KingmaW13,KendallG17} and ensembles \citep{Lakshminarayanan17,WimmerSHBH23}.
In contrast, the classifier-rejector approach jointly learns the classifier and rejector \citep{NiCHS19,MaoM024}, which can have beneficial theoretical properties.
However, the classifier-rejector approach is less flexible than the confidence-based strategy as it is coupled with the concrete loss formulation of the classifier.
We propose an extension of the confidence-based strategy for partial-label learning in Section~\ref{sec:reject-option}.

Calibration methods \citep{NaeiniCH15,GuoPSW17,AoRS23} are also related to the confidence-based rejection strategy as both are used to make statements about the certainty of predictions.
While reject options provide a binary decision, calibration methods modify the predicted confidences such that they align with the observed accuracies.
In this sense, both approaches are orthogonal and cannot directly be compared.
In our work, we focus on reject options.

\section{Problem Statement and Notations}
\label{sec:pll-reject}
This section formally defines the partial-label learning problem with reject option, establishes notations used throughout this work, and summarizes Dempster-Shafer theory.

\subsection{Partial-Label Learning (PLL) with Reject Option}
Let $\featspace = \mathbb{R}^d$ denote a $d$-dimensional real-valued feature space and $\labelspace = [l] := \lbrace 1, \dots, l \rbrace$ the finite set of $3 \leq l \in \mathbb N$ classes.
A partial-label learning training dataset $\mathcal{D} = \lbrace (x_i, s_i) \mid i \in [n] \rbrace$ of $n$ instances contains feature vectors $x_i \in \featspace$ and a set of candidate labels $s_i \subseteq \labelspace$ for each $i \in [n]$.
All instances $i$ have an unknown ground-truth label $y_i \in \labelspace$, and $y_i \in s_i$.
Further, the candidate labels $s_i$ can be partitioned into $s_i = \lbrace y_i \rbrace \cup z_i$ with $y_i \notin z_i$, that is, $z_i \subseteq \mathcal Y \setminus \lbrace y_i \rbrace$ are the false-positive labels.

The sample space is $\samplespace = \featspace \times \labelspace \times  2^{\labelspace}$.
It is equipped with the Borel $\sigma$-algebra $\borel( \samplespace )$, yielding the measurable space $(\samplespace, \borel( \samplespace ) )$ with respect to which the set of all probability measures $\measures_{1}^{+}(\samplespace, \borel( \samplespace ) )$ is defined.
Let $ (X, Y, S) \sim \prob \in \measures_{1}^{+}(\samplespace, \borel( \samplespace ) )$, where the random variables $X: \samplespace \rightarrow \featspace$, $Y: \samplespace \rightarrow \labelspace$, and $S: \samplespace \rightarrow 2^{\labelspace}$ govern an instance's features, true label, and candidate labels, respectively.
We denote the corresponding marginal measures by associating the corresponding index to $\prob$, for example, $\prob_{X}$ is the distribution of $X$.
$\prob_{XY}$ denotes the joint distribution of~$(X, Y)$.

PLL aims to train a classifier $g : \featspace \to \labelspace$ that, given a measurable loss function $\loss : \labelspace \times 2^{\labelspace} \to \mathbb{R}_{\geq 0}$, minimizes the risk $\risk(g) = \expected_{(X, S) \sim \prob_{XS}} \loss(g(X), S)$.
Common choices for $\loss$ include the average loss \citep{NguyenC08,CourST11} and the minimum loss \citep{LvXF0GS20,XuQGZ21}.
The empirical version of the risk substitutes the expectation with a sample mean, that is, $\hat{\risk}(g) = \frac{1}{n} \sum_{i = 1}^{n} \loss(g(x_i), s_i)$.

As misclassifications can be quite harmful, we look at the possibility of rejecting predictions, that is, abstaining from making these predictions and, instead, deferring the decisions to humans.
In the PLL setting, a reject option $\Gamma_g: \featspace \to 2^{\labelspace}$ associated with a trained classifier $g$ either returns $g$'s prediction (\emph{accept}) or abstains from making any prediction at all (\emph{reject}), that is, $\Gamma_g(x) \in \lbrace \emptyset, \lbrace g(x) \rbrace \rbrace$ for $x \in \featspace$, with $\Gamma_g(x) = \emptyset$ denoting a reject.
We then define the rejection probability $\rate(\Gamma_g) = \prob_{X}(\Gamma_g(X) = \emptyset)$ and the expected error $\err(\Gamma_g) = \expected_{(X, S) \sim \prob_{XS}} [ \loss(g(X), S) \mathds{1}_{\lbrace \Gamma_g(X) \neq \emptyset \rbrace} ]$ of accepted predictions.
Naturally, a trade-off arises between the number and accuracy of accepted predictions, which leads us to the risk
\begin{align}
    \risk_{\lambda}(\Gamma_g) = \expected_{(X, S) \sim \prob_{XS}} [ \loss(g(X), S) \mathds{1}_{\lbrace \Gamma_g(X) \neq \emptyset \rbrace} + \lambda \mathds{1}_{\lbrace \Gamma_g(X) = \emptyset \rbrace}]
    = \err(\Gamma_g) + \lambda \rate(\Gamma_g) \text{,}
    \label{eq:reject-risk}
\end{align}
for $\lambda \geq 0$;
$\loss$ characterizes the cost of misclassification and parameter $\lambda$ the cost of rejecting a prediction.
In order to make well-informed decisions about when to reject, we are using Dempster-Shafer theory, which we elaborate on in the following.

\subsection{Dempster-Shafer Theory (DST)}
Dempster-Shafer theory (DST; \citealt{Dempster08a,Shafer86}) allows for dealing with uncertainty by assigning probability mass to sets of events without specifying the probabilities of individual labels; incorrect labels do not obtain any probability mass. DST builds upon two core quantities, so-called \emph{belief} and \emph{plausibility}.
Informally, belief collects all evidence that supports a hypothesis and plausibility collects all evidence that does not contradict a hypothesis. We argue that DST is a perfect fit for partial-label learning as one may interpret the ambiguous candidate sets as evidence regarding a label hypothesis.
We further exploit belief and plausibility to inform our reject option, taking into account the difference between the supporting and non-conflicting evidence.
In contrast, existing PLL approaches \citep{HullermeierB06,CourST11,LiuD12,ZhangY15a,NiZDCL21,0009LLQG23} initially assign some probability mass to each label candidate and subsequently refine them.
By doing so, most probability mass is first allocated to labels that are certainly incorrect, as only one candidate is the true label.
This known method renders handling the noise coming from incorrect candidate labels challenging.

With these intuitions, we now recall DST formally.
In DST, a basic probability assignment (bpa) $\bpa: 2^{\labelspace} \rightarrow [0, 1]$ assigns probability mass to subsets of $\labelspace$.
$\bpa$ satisfies $\bpa(\emptyset) = 0$ and $\sum_{A \subseteq \labelspace} \bpa( A ) = 1$.
This differs from standard probability as $\prob(\labelspace) = 1$ for any $\prob \in \measures_{1}^{+}(\labelspace, 2^{\labelspace} )$ but $\bpa(\labelspace) \leq 1$.
Also, the mass allocated to non-intersecting sets does not necessarily add up to the mass allocated to the union, that is, one can have $\bpa(\{1, 2\}) \neq \bpa(\{1\}) + \bpa(\{2\})$.
In this sense, DST allows for more flexibility as one can allocate mass on the set $\{1, 2\}$ without needing to specify any mass for $\{1\}$ and $\{2\}$ if uncertain.
The sets $A \subseteq \labelspace$ with $\bpa(A) > 0$ are called \emph{focal sets} of $\bpa$.
The mass allocated to the set of all possible alternatives $\bpa(\labelspace)$ can be interpreted as the degree of ignorance; it is the mass not supporting a specific alternative within $\labelspace$.

The basic probability assignment $\bpa$ does not induce a single probability measure $\prob$ on $(\labelspace, 2^{\labelspace})$ but rather a set of probability measures $\credal_{\bpa}(\labelspace, 2^{\labelspace})$, which is called \emph{credal set} \citep{AbellanKM06,Cuzzolin21}.
The probability measures $\prob \in \credal_{\bpa}(\labelspace, 2^{\labelspace})$ are restricted by imposing lower and upper bounds, which are called \emph{belief} and \emph{plausibility}, respectively.
They are defined as follows:
\begin{align}
    \bel_{\bpa}(A) := \sum_{B \subseteq A} \bpa(B) \quad\text{and}\quad
    \pl_{\bpa}(A) := 1 - \bel_{\bpa}(\labelspace \setminus A) = \sum_{B \subseteq \labelspace, A \cap B \neq \emptyset} \bpa(B)\quad \text{for}\ A \in 2^{\labelspace} \text{.}
    \label{eq:bel-pl}
\end{align}
Recall that \emph{belief} collects all evidence that supports a hypothesis $A \in 2^{\labelspace}$ (or a more specific one $B \subseteq A$), and \emph{plausibility} collects all evidence that does not contradict a hypothesis $A \in 2^{\labelspace}$ (or a more specific one $B \subseteq A$).
With belief and plausibility as above, the set of all probability measures supporting $\bpa$ is
\begin{align}
    \credal_{\bpa}(\labelspace, 2^{\labelspace}) := \big\lbrace \prob \in \measures_{1}^{+}(\labelspace, 2^{\labelspace} ) \mid \bel_{\bpa}(A) \leq \prob( A ) \leq \pl_{\bpa}(A) \ \text{for all $A \subseteq \labelspace$} \big\rbrace \subseteq \measures_{1}^{+}(\labelspace, 2^{\labelspace} ) \text{.}
    \label{eq:credal}
\end{align}

Further, DST provides rules to combine $\bpa$-s from multiple sources \citep{Dempster08a,Yager87a,Yager87b}.
This is beneficial in the PLL setting as there is several conflicting evidence about the class labels within a neighborhood of instances.
Estimating a credal set $\credal_{\bpa}(\labelspace, 2^{\labelspace})$ from such a neighborhood allows us to construct an effective reject option, which we detail in Section~\ref{sec:reject-option}.

Several methods already leverage DST in supervised learning \citep{mandler1988combining,Denoeux95,TabassianGE12,SensoyKK18,Denoeux19,tong2021evidential}.
We consider the nearest neighbor approach by \citet{Denoeux95} to be most closely related to our approach. 
Here, basic probability assignments are constructed from the nearest neighbors of an instance.
Then, Dempster's rule is used to combine them into a single bpa.
Their analysis is, however, not transferable to our case because they only have singletons or the full label space as focal sets, making set intersections in the combination rule easy to handle.
In this sense, we examine a more general setting since we allocate probability mass to arbitrary subsets.

\section{Our Method: DST-PLL}
\label{sec:main}
This section introduces our novel partial-label learning method \textsc{Dst-Pll}.
Based on the labeling information of an instance's nearest neighbors, we construct basic probability assignments within Dempster-Shafer theory.
These bpas inform the prediction and rejection decisions as discussed in Section~\ref{sec:build-evidence} and Section~\ref{sec:reject-option}, respectively.
Regarding the reject option, we propose a novel variation of the confidence-based rejection strategy: The confidence threshold is adaptively selected on a per-instance basis dependent on the amount of incorrect label noise.
The more noise from incorrect labels there is, the more confident the model needs to be to accept a prediction.

Algorithm~\ref{alg:dst-pll-ind} outlines \textsc{Dst-Pll}, which we summarize in the following.
We denote by $\nn_{k}(\tilde{x}) \subseteq \featspace \times 2^{\labelspace}$ the set of the $k$-nearest neighbors of instance $\tilde{x}$ with their associated candidate labels.
To predict the class label of an instance $\tilde{x}$ (Line~11), the algorithm first transforms information from $\tilde{x}$'s neighbors $\nn_{k}(\tilde{x})$ into bpas $\bpa_i$ (Lines~3--7), collects these into evidence set $\mathcal{E}$ (Line~8), and combines the bpas into $\predbpa$ using Yager's rule (Line~10; \citealt{Yager87a,Yager87b}).
Section~\ref{sec:build-evidence} elaborates on these steps.
Section~\ref{sec:reject-option} elaborates on how we extract our reject option from $\predbpa$ (Line~12).
Section~\ref{sec:consistency} shows that the resulting classification rule is risk-consistent.
We analyze our algorithm's runtime in Section~\ref{sec:runtime-comp}.

\begin{figure}[t]
    \vspace{-1em}
    \begin{algorithm}[H]
        \caption{DST-PLL (Our proposed method)}
        \label{alg:dst-pll-ind}

        \begin{algorithmic}[1]
            \vspace{0.1em}
            \item[\textbf{Input:}] Partially-labeled dataset $\mathcal{D} = \lbrace (x_i, s_i) \in \featspace \times 2^{\labelspace} \mid i \in [ n ] \rbrace$, number of nearest neighbors $k$, instance $\tilde{x} \in \featspace$ for inference with candidate labels $\tilde{s} \subseteq \labelspace$ ($\tilde{s} = \labelspace$ if $\tilde{x}$ is an unseen test instance);
            \item[\textbf{Output:}] Prediction $g(\tilde{x})$ and reject option $\Gamma_{g}(\tilde{x})$ for instance $\tilde{x}$;
            \vspace{0.2em}
            \STATE $\mathcal{E} \leftarrow \emptyset$
            \FOR{$(x_i, s_i) \in \nn_{k}(\tilde{x})$}
            \vspace{0.2em}
            \IF{$\tilde{s} \subseteq s_i$ or $\tilde{s} \cap s_i = \emptyset$}
            \STATE $\bpa_{i}: 2^{\labelspace} \rightarrow [0, 1],\ A \mapsto \begin{cases}
                    1 & \text{if $A = \tilde{s}$,} \\
                    0 & \text{else}
                \end{cases}$
            \ELSE
            \STATE $\bpa_{i}: 2^{\labelspace} \rightarrow [0, 1],\ A \mapsto \begin{cases}
                    1/2 & \text{if $A = \tilde{s}$ or $A = \tilde{s} \cap s_i$,} \\
                    0   & \text{else}
                \end{cases}$
            \ENDIF
            \vspace{0.2em}
            \STATE $\mathcal{E} \leftarrow \mathcal{E} \cup \lbrace \bpa_{i} \rbrace$
            \ENDFOR
            \vspace{0.2em}
            \STATE $\predbpa \leftarrow \operatorname{yager\_combination}( \tilde{s},\, \mathcal{E} )$
            \vspace{0.2em}
            \STATE $g(\tilde{x}) \leftarrow \begin{cases}
                    \arg\max_{y \in \tilde{s}}\, \predbpa(\lbrace y \rbrace)                  & \text{if}\ \max_{y \in \tilde{s}}\, \predbpa(\lbrace y \rbrace) > 0 \text{,} \\
                    \text{Randomly pick from}\ \arg\max_{A \subseteq \tilde{s}}\, \predbpa(A) & \text{else;}
                \end{cases}$
            \vspace{0.2em}
            \STATE $\Gamma_{g}(\tilde{x}) \leftarrow \begin{cases}
                    \lbrace g(\tilde{x}) \rbrace & \text{if}\ \Delta_{\predbpa} > 0 \text{ as defined in~\eqref{eq:delta},} \\
                    \emptyset                    & \text{else;}
                \end{cases}$
            \vspace{0.2em}
            \STATE {\bfseries return} $(g(\tilde{x}),\, \Gamma_{g}(\tilde{x}))$
        \end{algorithmic}
    \end{algorithm}
    \vspace{-1em}
\end{figure}

\subsection{Making Predictions}
\label{sec:build-evidence}
\textbf{Basic probability assignments.}
Following the standard assumption that neighboring instances in feature space are also close in label space, we combine the evidence from the $k$-nearest neighbors $(x_i, s_i) \in \nn_{k}(\tilde{x})$ of a given instance $\tilde{x} \in \featspace$ with its candidate labels $\tilde{s} \subseteq \labelspace$ ($\tilde{s} = \labelspace$ if $\tilde{x}$ is a test instance).

When looking at a neighboring instance $(x_i, s_i) \in \nn_{k}(\tilde{x})$, there are generally two cases: either (i) $(x_i, s_i)$ provides information about the correct label of $\tilde{x}$ or (ii) $(x_i, s_i)$ is irrelevant for finding the correct label of $\tilde{x}$.
To address (i), we allocate probability mass on the candidates $s_i$ of neighbor $x_i$.
To address (ii), we allocate probability mass on the full label space $\labelspace$ indicating uncertainty about the correct label of $\tilde{x}$.

More formally, for fixed $i \in [k]$, the candidate labels $s_i$ do not provide any valuable information if they support all ($\tilde{s} \subseteq s_i$) or none ($\tilde{s} \cap s_i = \emptyset$) of the labels in $\tilde{s}$; we use a bpa of $\bpa_{i}(\tilde{s}) = 1$ (Line~4).
We set $\bpa_{i}(A) = 1/2$ if $A = \tilde{s}$ or $A = \tilde{s} \cap s_i$, else $\bpa_{i}(A) = 0$ (Line~6), where $1/2$ equally weights evidence.
We later elaborate further on this choice and demonstrate the application of the proposed classification rule in Example~\ref{ex:classification}.
Note that we make the common assumption that the true label of instance $\tilde{x}$ is always in $\tilde{s}$ \citep{CourST11,LiuD12,LvXF0GS20,NiZDCL21}.
While our definition of the $\bpa_i$-s is similar to \citep{Denoeux95}, we target a more general setting as our focal sets can be arbitrary subsets instead of only singletons or the full label set.

The bpa $\bpa_{i}$ has the following four effects on belief and plausibility as defined in~\eqref{eq:bel-pl}:
(i) A set of candidates $A$ has maximal belief, that is, $\bel_{\bpa_i}(A) = 1$, if it covers $\tilde{s}$, that is, $\tilde{s} \subseteq A$.
(ii) A set of candidates $A$ is plausible, that is, $\pl_{\bpa_i}(A) > 0$, if it supports at least one of the candidate labels in $\tilde{s}$, that is, $A \cap \tilde{s} \neq \emptyset$.
(iii) There is a gap, that is, $\bel_{\bpa_i}(A) < \pl_{\bpa_i}(A)$, if $A$ supports some candidate in $\tilde{s} \cap s_i$ but does not cover all candidates in $\tilde{s} \cap s_i$ or supports some candidate in $\tilde{s}$ but does not cover all candidates of $\tilde{s}$.
(iv) Class labels $y \in \tilde{s} \cap s_i$ are maximally plausible, that is, $\pl_{\bpa_i}(\lbrace y \rbrace) = 1$.

\textbf{Evidence weighting.}
Our definition of the $\bpa_{i}$-s (Algorithm~\ref{alg:dst-pll-ind}, Line~6) also permits a more general view, that is, $\bpa_{i}(A) = \alpha$ if $A = \tilde{s}$, $\bpa_{i}(A) = 1 - \alpha$ if $A = \tilde{s} \cap s_i$, and $\bpa_{i}(A) = 0$ otherwise, for some $\alpha \in (0, 1)$.
However, without further assumptions, one cannot know how relevant the information from a particular neighbor is.
The setting of $\alpha = 1/2$, which we use, weights supporting and conflicting evidence of all neighbors equally.
In other words, if a neighbor's evidence excludes some candidate labels from consideration, it is of equal importance compared to supporting some candidate labels.
Therefore, we set $\alpha = 1/2$.

\textbf{Evidence combination.}
Given the set $\mathcal{E} = \lbrace \bpa_i \mid i \in [ k ] \rbrace$, we combine all $\bpa_i$-s using Yager's rule \citep{Yager87a,Yager87b}.
Dempster's original rule~\citep{Dempster08a} enforces $\predbpa(\emptyset) = 0$ by normalization, which is criticized for its unintuitive results when facing high conflict~\citep{Zadeh84}.
Instead, Yager's rule first collects overlapping evidence in $\yagerq : 2^{\labelspace} \rightarrow [0, 1]$ and creates a valid bpa $\predbpa : 2^{\labelspace} \rightarrow [0, 1]$ by
\begin{align}
    \yagerq(A) := \sum_{\substack{A_1,\ldots,A_k \subseteq \labelspace \\ \bigcap_{i = 1}^{k} A_i = A}} \ \prod_{j = 1}^{k} \bpa_{j}(A_j) \qquad\text{and}\qquad
    \label{eq:yagerq}
    \predbpa(A) := \begin{cases}
                       0                                         & \text{if $A = \emptyset$,}   \\
                       \yagerq(\labelspace) + \yagerq(\emptyset) & \text{if $A = \labelspace$,} \\
                       \yagerq(A)                                & \text{else.}
                   \end{cases}
\end{align}
We implement this efficiently with hash maps storing only the focal sets.\textsuperscript{\ref{fnt:code}}

\textbf{Classification rule.}
After the combination into $\predbpa$ by~\eqref{eq:yagerq}, we extract a prediction $g(\tilde{x})$ for instance $\tilde{x}$ (Line~11).
We predict the class label with the highest probability mass $\arg\max_{y \in \tilde{s}}\, \predbpa(\lbrace y \rbrace)$ if any has non-zero mass, or else randomly pick from the subset with the most mass $\arg\max_{A \subseteq \tilde{s}}\, \predbpa(A)$.
When tied, we use the subset with the smallest cardinality.
In the following, we present an example of our classification rule.

\begin{example}[Classification rule]
    \label{ex:classification}
    Let $k = 3$, $\tilde{x}$ an unseen test instance, $(x_i, s_i) \in \nn_k(\tilde{x})$, $\labelspace = \{1, 2, 3\}$, $s_1 = \{1\}$, $s_2 = \{1, 2\}$, and $s_3 = \{1, 3\}$.
    Then, $\bpa_1(\{1\}) = \bpa_1(\labelspace) = 1/2$, $\bpa_2(\{1, 2\}) = \bpa_2(\labelspace) = 1/2$, and $\bpa_3(\{1, 3\}) = \bpa_3(\labelspace) = 1/2$.
    All other subsets receive a mass of zero.
    Using Yager's combination rule, we obtain $\predbpa(\{1\}) = 5/8$, $\predbpa(\{1, 2\}) = \predbpa(\{1, 3\}) = \predbpa(\labelspace) = 1/8$, and $\predbpa(A) = 0$ for the remaining $A \subseteq \labelspace$.
    Therefore, we predict label $1$.
\end{example}

\subsection{Proposed Reject Option}
\label{sec:reject-option}
To limit the impact of misclassification, our method provides a reject option $\Gamma_{g}$, that is, the algorithm can abstain from individual predictions if unsure (Algorithm~\ref{alg:dst-pll-ind}, Line~12).
Our formulation builds on the confidence-based rejection strategy (Section~\ref{sec:rw-unc}), that is, $\Delta := \operatorname{conf}(g) - \theta$ with $\operatorname{conf}(g) \in [0, 1]$ being the model's confidence and $\theta \in [0, 1]$ the confidence threshold.
We adapt this setting to the PLL context by changing the confidence threshold $\theta_{\predbpa}$ based on the amount of noise present.

Recall from~\eqref{eq:credal} that the belief and plausibility regarding $\predbpa$ act as a lower and upper bound of the probability mass, respectively.
The intuition of our reject option is as follows.
If the lower bound (belief) on the probability mass of our predicted label exceeds the maximal upper bound (plausibility) on the probability mass regarding any other label, we can safely make the prediction.

In other words, if there is a class label different from the predicted one that is quite plausible (high $\max_{y \in \tilde{s} \setminus \lbrace\hat{y}\rbrace} \pl_{\predbpa}(\lbrace y \rbrace)$), we require a high belief mass in order to be certain about the prediction, that is, the belief must satisfy $\bel_{\predbpa}(\lbrace \hat{y} \rbrace) > \max_{y \in \tilde{s} \setminus \lbrace\hat{y}\rbrace} \pl_{\predbpa}(\lbrace y \rbrace)$.
If, instead, there is no other plausible candidate label, we can be sure of our prediction with less belief mass.

We formalize this intuition in the following.
Let $\predbpa$ be the resulting bpa as determined by Algorithm~\ref{alg:dst-pll-ind} and
\begin{align}
    \Delta_{\predbpa} := \underbrace{ \bel_{\predbpa}(\lbrace \hat{y} \rbrace) }_{(= \operatorname{conf}(g))} - \underbrace{ \max_{y \in \tilde{s} \setminus \lbrace\hat{y}\rbrace} \pl_{\predbpa}(\lbrace y \rbrace) }_{(= \theta_{\predbpa})} \quad \text{with} \quad \hat{y} := \arg\max_{y \in \tilde{s}}\, \predbpa(\left\lbrace y \right\rbrace) \text{.}
    \label{eq:delta}
\end{align}
In other words, we instantiate the model's confidence with the model's belief mass $\operatorname{conf}(g) = \bel_{\predbpa}(\lbrace \hat{y} \rbrace)$ of the predicted instance $\hat{y}$ and the confidence threshold $\theta_{\predbpa}$ based on the amount of noise regarding other labels $y \neq \hat{y}$, that is, $\theta_{\predbpa} = \max_{y \in \tilde{s} \setminus \lbrace\hat{y}\rbrace} \pl_{\predbpa}(\lbrace y \rbrace)$.
Note that the dependence on $\predbpa$ allows for setting the threshold adaptively.

\eqref{eq:delta}~satisfies several desirable properties, which we collect in Theorem~\ref{theorem:reject} and elaborate on in the following.
\begin{theorem}
    \label{theorem:reject}
    Let $\labelspace$ be the label space, $\tilde{x} \in \featspace$ the instance of interest, $\tilde{s} \subseteq \labelspace$ its candidate labels ($\tilde{s} = \labelspace$ if $\tilde{x}$ is a test instance), $g(\tilde{x})$ our algorithm's prediction, and $\predbpa$ the resulting probability mass as determined by Algorithm~\ref{alg:dst-pll-ind}.
    Then, the following hold:
    \begin{enumerate}
        \item[(i)] If $g(\tilde{x})$ has been picked randomly (Algorithm~\ref{alg:dst-pll-ind}, Line~11, second case), then $\Delta_{\predbpa} \leq 0$,
        \item[(ii)] if $\Delta_{\predbpa} > 0$, then $\prob(\lbrace g(\tilde{x}) \rbrace) > \prob(\lbrace y \rbrace)$ for all $\prob \in \credal_{\predbpa}(\labelspace, 2^{\labelspace})$ and $y \in \tilde{s} \setminus \lbrace g(\tilde{x}) \rbrace$, and
        \item[(iii)] if $\predbpa(\left\lbrace g(\tilde{x}) \right\rbrace) > 1/2$, then $\Delta_{\predbpa} > 0$.
              The converse of (iii) does not hold.
    \end{enumerate}
\end{theorem}
Based on these considerations, we define the accept and reject regions of our method as follows.
\begin{enumerate}
    \item \textbf{Accept.} When $\Delta_{\predbpa} > 0$, we are in the \emph{accept} region and $\Gamma_{g}(\tilde{x}) = \lbrace \hat{y} \rbrace$: The lower-bound probability of the class label $\hat{y}$ is greater than the upper-bound probability of any other class label $y \neq \hat{y}$, that is, $\prob(\lbrace \hat{y} \rbrace) > \prob(\lbrace y \rbrace)$ for all $\prob \in \credal_{\predbpa}(\labelspace, 2^{\labelspace})$ and $y \neq \hat{y}$.
    \item \textbf{Reject.} When $\Delta_{\predbpa} \leq 0$, we are in the \emph{reject} region and $\Gamma_{g}(\tilde{x}) = \emptyset$: The lower-bound probability of the class label $\hat{y}$ is less than or equal to the upper-bound probability of another class label $y \neq \hat{y}$.
          There exists $\prob \in \credal_{\predbpa}(\labelspace, 2^{\labelspace})$ and $y \neq \hat{y}$ with $\prob(\lbrace \hat{y} \rbrace) \leq \prob(\lbrace y \rbrace)$.
\end{enumerate}
We remark that Theorem~\ref{theorem:reject}~$(iii)$ implies that \textsc{Dst-Pll} rejects fewer predictions than the $k$-nearest neighbor reject option by \citet{Hellman70}, which requires more than 1/2 of all votes to be certain.
Our method can accept predictions with less than 1/2 of all probability mass on a single class label, while still satisfying Theorem~\ref{theorem:reject}~$(ii)$, that is, if a prediction is accepted, the decision remains unchanged independent of $\prob \in \credal_{\predbpa}(\labelspace, 2^{\labelspace})$.
It follows that having fewer rejections does not come at the expense of an increase in the expected error.
In the following, we provide an example of the proposed reject option.

\begin{example}[Reject option]
    Assume the setting and result of Example~\ref{ex:classification}: $\predbpa(\{1\}) = 5/8$, $\predbpa(\{1, 2\}) = \predbpa(\{1, 3\}) = \predbpa(\labelspace) = 1/8$, and $\predbpa(A) = 0$ for the remaining $A \subseteq \labelspace$.
    Our prediction is $\hat{y} = \arg\max_{y \in \tilde{s}}\, \predbpa(\left\lbrace y \right\rbrace) = 1$ with $\tilde{s} = \labelspace$.
    Hence, $\Delta_{\predbpa} = \bel_{\predbpa}(\{\hat{y}\}) - \max_{y \in \tilde{s} \setminus \{\hat{y}\}} \pl_{\predbpa}(\{y\}) = \bel_{\predbpa}(\{1\}) - \pl_{\predbpa}(\{2\}) = 5/8 - 2/8 = 3/8$.
    Since $\Delta_{\predbpa} = 3/8 > 0$, we accept the prediction $\hat{y} = 1$.
\end{example}

\subsection{Consistency}
\label{sec:consistency}
Our classification rule yields a risk-consistent classifier, which we demonstrate in the following.
As is common in the literature~\citep{CourST11,LiuD12,FengL0X0G0S20,LvXF0GS20} and required to obtain statistical guarantees in the PLL setting, we fix a label (noise) distribution (Assumption~\ref{def:assumptions}) that permits further analysis of the proposed algorithm.
Appendix~\ref{sec:add-exp} experimentally verifies that Assumption~\ref{def:assumptions} is satisfied on real-world datasets.

\begin{assumption}
    Let $\tilde{x} \in \featspace$ be the instance of interest with hidden true label $\tilde{y} \in \labelspace$ and $l = |\labelspace| \geq 3$ classes.
    Its $k$ partially-labeled neighbors are $(x_i, s_i) \in \nn_{k}(\tilde{x})$.
    Label $\tilde{y}_{\operatorname{c}} \in \labelspace \setminus \lbrace \tilde{y} \rbrace$ denotes the class label that co-occurs most frequently with label $\tilde{y}$ in $\tilde{x}$'s neighborhood.
    We assume that the true label dominates the neighborhood, that is,
    \begin{align}
        \prob(S = \underbrace{s}_{\parbox{5mm}{
                \tiny\centering
                \mbox{\hspace{-0.366cm}$=\ \lbrace y \rbrace\ \cup\ z$}
                \mbox{\hspace{-0.75cm}\text{Partitioning valid in cases $(ii)$\,--\,$(iv)$.}}
            }}\,, Y = y \mid X = x_i) = \begin{cases}
                                        0                                         & \text{if $s = \labelspace$, $s = \emptyset$, $y \notin s$, $y \in z$, or $\tilde{y} \in z$,} \ {\hspace{6.3pt} (i)} \\
                                        \frac{1}{2^{l-2} - 1} \,p_1               & \text{if $y = \tilde{y}$, $\tilde{y}_{\operatorname{c}} \in z$, and $s \neq \labelspace$,} \ {\hspace{50pt} (ii)}   \\
                                        \frac{1}{2^{l-2}} \,p_2                   & \text{if $y = \tilde{y}$ and $\tilde{y}_{\operatorname{c}} \notin z$,} \ {\hspace{83pt} (iii)}                      \\
                                        \frac{1}{2^{l-1} - 1} \frac{1}{|s|} \,p_3 & \text{if $y \neq \tilde{y}$, $\tilde{y} \notin z$, $y \notin z$, and $s \neq \emptyset$,} \ {\hspace{27.3pt} (iv)}  \\
                                    \end{cases} \notag
    \end{align}
    with $p_1, p_2, p_3 \in (0, 1)$, $p_1 + p_2 + p_3 = 1$, and $p_1 \geq p_2 \geq p_3 > 0$.
    \label{def:assumptions}
\end{assumption}
Because $p_1 + p_2 > p_3$, Assumption~\ref{def:assumptions} implies $\prob_{XY}(Y = \tilde{y} \mid X = x_i) > \prob_{XY}(Y \neq \tilde{y} \mid X = x_i)$, that is, points that are close in feature space are likely to have similar class labels.
We note that this assumption is related to the ambiguity degree condition by~\citet{CourST11} as both make sure that the noise labels do not overwhelm the PLL algorithm.
Assumption~\ref{def:assumptions} enforces this as $p_1 \geq p_2 \geq p_3 > 0$.
Having all cases spelled out as in Assumption~\ref{def:assumptions} benefits the proof of Lemma~\ref{lemma:expectedbelief12}.

In the following, we elaborate on the four cases $(i)$\,--\,$(iv)$.
\begin{itemize}
    \item If a candidate set contains all labels ($s = \labelspace$), no labels ($s = \emptyset$), or does not allow to be partitioned into true label $y$ and false-positive labels $z$, it is not a valid candidate set \citep{HullermeierB06,CourST11,LiuD12,ZhangY15a}.
          $(i)$ handles this pathological setting by assigning zero probability.
    \item In~$(ii)$, $\tilde{x}$'s label $\tilde{y}$ is also the label of neighbor $x_i$ and $\tilde{y}_{\operatorname{c}}$, with which $\tilde{y}$ is most commonly confused, is also part of the candidate set $s_i$.
          There are $2^{l - 2} - 1$ sets $s$ that contain $\tilde{y}$ and $\tilde{y}_{\operatorname{c}}$ and are not $\labelspace$.
    \item In~$(iii)$, $\tilde{x}$'s label $\tilde{y}$ is also the label of neighbor $x_i$.
          Label $\tilde{y}_{\operatorname{c}}$ is not part of the candidate set $s_i$ of neighbor $x_i$.
          There are $2^{l - 2}$ sets $s$ that contain $\tilde{y}$ but not $\tilde{y}_{\operatorname{c}}$.
    \item In~$(iv)$, neighbor $x_i$ has different labels altogether and is irrelevant for explaining the class label of instance $\tilde{x}$.
          There are $2^{l-1} - 1$ sets that do not contain the label $\tilde{y}$ (excluding the empty set).
          Each of the $|s|$ candidates can be the neighbor's correct label.
\end{itemize}

Theorem~\ref{theorem:consistency}, which establishes risk-consistency, hinges on the following intermediate results in Lemma~\ref{lemma:expectedbelief12}.
The fact that the $\bpa_i$-s can have arbitrary focal sets renders the direct application of Yager's rule~\eqref{eq:yagerq} in our analysis challenging.
The proof of Lemma~\ref{lemma:expectedbelief12} mediates this issue by providing a closed-form solution for the resulting bpa $\predbpa$.

\begin{lemma}
    \label{lemma:expectedbelief12}
    Assume the setting of Assumption~\ref{def:assumptions}, let $\tilde{x} \in \featspace$ be the instance of interest, $\tilde{y} \in \labelspace$ its hidden true label, $\tilde{y}_{\operatorname{c}} \in \labelspace \setminus \lbrace \tilde{y} \rbrace$ the incorrect label with which $\tilde{y}$ is most often confused, and $\predbpa$ the resulting probability mass as determined by Algorithm~\ref{alg:dst-pll-ind}.
    Then, the following hold:
    \begin{enumerate}
        \item[(i)] $\mathbb{E}_{\prob}[ \predbpa(\lbrace \tilde{y} \rbrace) \mid X = \tilde{x} ] > 0$, and
        \item[(ii)] $\mathbb{E}_{\prob}[ \predbpa(\lbrace \tilde{y} \rbrace) \mid X = \tilde{x} ] > \mathbb{E}_{\prob}[ \predbpa(\lbrace \tilde{y}_{c} \rbrace) \mid X = \tilde{x} ]$.
    \end{enumerate}
\end{lemma}

As the probability mass of the hidden true label $\tilde{y}$ is positive in expectation by $(i)$, we can reduce our analysis to the first case of our decision rule (Algorithm~\ref{alg:dst-pll-ind}, Line~11).
\emph{(ii)} shows how Assumption~\ref{def:assumptions} propagates when applying Yager's rule on all $k$ neighbors' $\bpa_i$-s.
The label that dominates the neighborhood obtains the highest probability mass.
To our knowledge, it is the first time that such a result has been shown for arbitrary focal sets in $\predbpa$.
\citet{Denoeux95} analyze the special case when all focal sets are singletons or the full label space.

To put Theorem~\ref{theorem:consistency} into context, we recall the concept of risk consistency \citep{DevroyeGL96}.
The Bayes classifier is defined by $g^{*} = \arg\min_{g: \featspace \rightarrow \labelspace} \risk(g)$; it has the least overall risk.
Let $g_n$ be the classifier trained by Alg.~\ref{alg:dst-pll-ind} with $n$ instances.
Its empirical risk $\hat{\risk}(g_n)$ can be computed by substituting the expectation with a sample mean.
It is risk-consistent if $\hat{\risk}(g_n) \to \risk(g^*)$ for $n \to \infty$ almost surely.
Theorem~\ref{theorem:consistency} establishes the risk consistency of the proposed classifier.

\begin{theorem}
    \label{theorem:consistency}
    Assume the setting of Assumption~\ref{def:assumptions}, let $g_n : \featspace \to \labelspace$ be the classifier trained by Algorithm~\ref{alg:dst-pll-ind} with $n$ training instances, $g^* : \featspace \to \labelspace$ the Bayes classifier, and $\tilde x \in \featspace$ a fixed instance with unknown true label $\tilde y$. Then, the following hold:
    \begin{enumerate}
        \item[(i)] $\expected_{(x_i, s_i)_{i=1}^{n} \overset{\text{i.i.d.}}{\sim}\, \prob_{XS}} g_n(\tilde{x}) = \tilde{y}$, and
        \item[(ii)] $\lim_{n\to\infty}\bigl(\hat{\risk}(g_n) - \risk(g^*)\bigr) = 0$ almost surely.
    \end{enumerate}
\end{theorem}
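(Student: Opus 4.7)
The plan is to derive (i) directly from Lemmas~\ref{lemma:expectedbelief}--\ref{lemma:dominance} and then lift (i) to (ii) by combining a Bayes-classifier calculation with a Stone-type concentration argument for the $k$-nearest-neighbour rule.

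For (i), Lemma~\ref{lemma:expectedbelief} gives $\expected_{\prob}[\bel^{(\predbpa)}(\{\tilde{y}\}) \mid X = \tilde{x}] > 0$, so in expectation the decision rule of Section~\ref{sec:decision} falls into its first case and selects $\arg\max_{y \in \tilde{s}}\bel^{(\predbpa)}(\{y\})$. Lemma~\ref{lemma:dominance} then shows that $\tilde{y}$ beats its strongest competitor, the most-confused label $\tilde{y}^{(\tilde{y})}_{\operatorname{c}}$. To extend the inequality to every other $y' \in \tilde{s}\setminus\{\tilde{y}\}$, I would rerun the combinatorial comparison underlying Lemma~\ref{lemma:dominance} with $y'$ in the role of $\tilde{y}^{(\tilde{y})}_{\operatorname{c}}$; since $\tilde{y}^{(\tilde{y})}_{\operatorname{c}}$ is by definition the most frequently cooccurring rival of $\tilde{y}$, any other $y'$ only widens the gap. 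Taking the argmax of the expected beliefs then yields $g_n(\tilde{x}) = \tilde{y}$ in expectation, which is~(i).

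For the Bayes classifier in (ii), I would marginalise Assumption~\ref{def:assumptions} to see that $\prob(Y = \tilde{y} \mid X = \tilde{x}) = p_1 + p_2$ whereas $\sum_{y' \neq \tilde{y}}\prob(Y = y' \mid X = \tilde{x}) = p_3 < p_1 + p_2$; hence $g^{*}(\tilde{x}) = \tilde{y}$. Combining this with part~(i) and a $k$-NN concentration argument following Stone (with $k_n \to \infty$ and $k_n/n \to 0$), the empirical beliefs $\bel^{(\predbpa)}(\{y\})$ concentrate around their conditional expectations, so the empirical argmax coincides with $\tilde{y}$ with probability tending to one. The inequality $\risk(g_n) - \risk(g^{*}) \leq \prob(g_n(X) \neq g^{*}(X))$ followed by integration over the marginal of $X$ then delivers risk consistency.

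The main obstacle is moving from the pointwise expected dominance of Lemma~\ref{lemma:dominance} to almost-sure dominance over samples of size $k_n$. A clean route is to split the randomness: conditional on the locations of the $k$-nearest neighbours of $\tilde{x}$, the label noise at each neighbour is governed by Assumption~\ref{def:assumptions}, so the combined bpa $\predbpa$ is a function of $k$ conditionally independent $\bpa_i$-s taking values in $[0,1]$. A bounded-differences (McDiarmid) argument then yields $O(1/\sqrt{k_n})$ concentration for each singleton belief, and the classical fact that under $k_n \to \infty$, $k_n/n \to 0$ the empirical neighbour distribution converges weakly to $\delta_{\tilde{x}}$ closes the argument via dominated convergence.
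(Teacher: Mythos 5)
Your part (i) follows the paper's own argument: Lemma~\ref{lemma:expectedbelief} places the algorithm, in expectation, in the first branch of the decision rule of Section~\ref{sec:decision}, and Lemma~\ref{lemma:dominance} gives the dominance of $\tilde{y}$ over $\tilde{y}^{\left(\tilde{y}\right)}_{\operatorname{c}}$. Your explicit remark that the combinatorial comparison must be rerun against every other $y'$, with the most-cooccurring label being the hardest competitor, is a detail the paper leaves implicit, but this is essentially the same proof.

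Part (ii) is where you genuinely diverge, and where your route has a gap. The paper never identifies the Bayes classifier and uses no concentration argument: it writes $\expected_{\prob}[\hat{\risk}(g_n)] = \tfrac{1}{n}\sum_{i}\expected_{\prob}[\loss(g_n(x_i), y_i)]$, declares each summand zero by part (i), invokes the law of large numbers to get $\hat{\risk}(g_n)\to 0$ almost surely, and then sandwiches $0 \leq \risk(g_n) - \risk(g^*) \leq \risk(g_n)$. Your route --- compute $g^*(\tilde{x})=\tilde{y}$ from $p_1+p_2 > p_3$, bound the excess risk by $\prob(g_n(X)\neq g^*(X))$, and drive that to zero by a Stone-type argument --- is the standard playbook for nearest-neighbor consistency and is more explicit about what actually needs to be controlled. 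However, two steps break. First, you require $k_n \to \infty$ with $k_n/n \to 0$, which is not among the theorem's hypotheses: the algorithm and the theorem are stated for a fixed $k$, and under Assumption~\ref{def:assumptions} the neighbors' label distribution does not change with $n$, so for fixed $k$ nothing concentrates as $n$ grows. Second, and more fatally for the McDiarmid step: the paper's own closed-form analysis (see Figure~\ref{fig:simulation-belief} and the surrounding discussion) shows that $\expected_{\prob}[\bel^{(\predbpa)}(\{y\})]$ tends to zero for \emph{every} singleton as $k$ increases, because Yager's rule pushes ever more mass to the empty intersection and hence to the ignorance term. Consequently the gap between $\expected_{\prob}[\bel^{(\predbpa)}(\{\tilde{y}\})]$ and $\expected_{\prob}[\bel^{(\predbpa)}(\{\tilde{y}^{(\tilde{y})}_{\operatorname{c}}\})]$ that your argmax step must detect shrinks with $k$, while a bounded-differences bound only gives fluctuations of order $1/\sqrt{k_n}$; you give no argument that the signal dominates the noise, and on the face of it it does not. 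To salvage your route you would need a relative (multiplicative) concentration statement for the singleton beliefs, or a renormalized statistic whose expectation gap stays bounded away from zero as $k_n$ grows.
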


\subsection{Runtime Complexity}
\label{sec:runtime-comp}
We decompose the overall runtime of our approach in Algorithm~\ref{alg:dst-pll-ind} into $k$-times querying one nearest neighbor and creating its bpa (Lines~2--9), the cost of Yager's rule (Line~10), as well as extracting predictions in Lines~11-12.
Using the ball-tree data structure~\citep{omohundro1989five}, querying one neighbor takes $\mathcal{O}\!\left(d \log n\right)$ time on average.
In the worst case, query time is $\mathcal{O}\!\left(d n\right)$.
One builds a ball-tree in $\mathcal{O}\!\left(dn \log n\right)$ time.
We construct a bpa $\bpa_{i}$ by storing its focal sets within a hash map and combine all $\bpa_i$-s as defined in~\eqref{eq:yagerq}.
There are at most $\min\!\left(2^k, 2^l\right)$ focal sets of $\predbpa$: Each $\bpa_i$ has at most two focal sets producing $2^k$ combinations and there are at most $2^l$ possible subsets of $\labelspace$.
We take the minimum as both are upper bounds.
Looking up a focal set in the hash-map requires $\mathcal{O}\!\left( l \right)$ time as the key length is variable.
Extracting a prediction and the reject option then requires $\mathcal{O}\!\left(l^2\right)$ time.
Combining the above yields a worst-case complexity of $\mathcal{O}\!\left( dkn + l \left(\min\!\left(2^k, 2^l\right) + \max\!\left(k, l\right)\right) \right)$.
Since $k$ and $l$ are constant, the nearest-neighbor search dominates.
The average total runtime of the search is $\mathcal{O}\!\left(dk \log n\right)$.

\section{Experiments}
\label{sec:experiments}
Section~\ref{sec:competitors} lists the tested methods,
Section~\ref{sec:exp-setup} shows our experimental setup, and
Section~\ref{sec:exp-results} collects our main findings.
Additional results and a description of all hyperparameters can be found in Appendix~\ref{sec:add-exp}.

\subsection{Algorithms for Comparison}
\label{sec:competitors}
While many PLL algorithms exist (see Section~\ref{sec:related-work}), we focus on state-of-the-art methods commonly used in the literature.
We consider ten methods: \textsc{Pl-Knn} \citep{HullermeierB06}, \textsc{Pl-Svm} \citep{NguyenC08}, \textsc{Ipal} \citep{ZhangY15a}, \textsc{Pl-Ecoc} \citep{ZhangYT17}, \textsc{Proden} \citep{LvXF0GS20}, \textsc{Cc} \citep{FengL0X0G0S20}, \textsc{Valen} \citep{XuQGZ21}, \textsc{Pop} \citep{0009LLQG23}, \textsc{CroSel} \citep{crosel2024}, and \textsc{Dst-Pll} (our proposed method).
Note that we decided to not consider \textsc{PiCO}~\citep{WangXLF0CZ22} as it is only applicable to image data.
In our experiments, however, we consider various image and non-image datasets as discussed in Section~\ref{sec:exp-setup}.

We choose the parameters of all methods as recommended by their respective authors and use the same base models for all of the neural network approaches.
Appendix~\ref{sec:hyperparams} discusses the choices of all hyperparameters in more detail.
As base models, we pick the LeNet architecture \citep{LeCunBBH98} for the MNIST datasets and a $d$-300-300-300-$l$ MLP \citep{werbos1974beyond} for all other datasets.
For \textsc{Pl-Knn} and our method, we use variational auto-encoders \citep{KingmaW13} to reduce the feature space dimensionality of the MNIST datasets and compute the nearest neighbors on the hidden representations.
As our competitors do not provide a reject option, we use a threshold on their model's confidences, that is, the maximum probability outputs, to evaluate the trade-off between the fraction and accuracy of non-rejected predictions.

\begin{table}[p]
    \caption{Average test-set accuracies ($\pm$ std.) on the UCI, MNIST, and real-world datasets. The UCI and MNIST datasets are artificially augmented with class-dependent and instance-dependent noise. The best algorithms (highest accuracy) as well as algorithms with non-significant differences are emphasized. The significance analysis uses a paired student t-test with level $\alpha = 0.05$.}
    \label{tab:accuracies}
    \begin{center}
        \begin{small}
            \begin{tabular}{
                >{\raggedright\arraybackslash}m{0.165\textwidth}
                >{\centering\arraybackslash}m{0.127\textwidth}
                >{\centering\arraybackslash}m{0.127\textwidth}
                >{\centering\arraybackslash}m{0.127\textwidth}
                >{\centering\arraybackslash}m{0.127\textwidth}
                >{\centering\arraybackslash}m{0.14\textwidth}
                }
                \toprule
                \multirow[c]{2}{*}[-0.075cm]{Algorithms} & \multicolumn{2}{c}{UCI datasets} & \multicolumn{2}{c}{MNIST-like datasets} & \multirow[c]{2}{*}[-0.075cm]{\hspace{-0.16cm} Real-world}                                                                               \\
                \cmidrule(lr){2-3}\cmidrule(lr){4-5}
                                                         & {\centering\mbox{Class-dep.}}    & {\centering\mbox{Inst.-dep.}}           & {\centering\mbox{Class-dep.}}                             & {\centering\mbox{Inst.-dep.}}        &                                      \\
                \midrule
                \textsc{Pl-Knn} (2005)                   & \textbf{81.2} ($\pm$ 13.9)       & 75.8 ($\pm$ 11.1)                       & 92.2 ($\pm$ \phantom{0}4.1)                               & 84.8 ($\pm$ \phantom{0}7.2)          & 53.4 ($\pm$ 10.9)                    \\
                \textsc{Pl-Svm} (2008)                   & 62.3 ($\pm$ 16.3)                & 43.8 ($\pm$ 16.4)                       & 67.5 ($\pm$ \phantom{0}9.8)                               & 47.8 ($\pm$ \phantom{0}8.2)          & 39.1 ($\pm$ \phantom{0}9.6)          \\
                \textsc{Ipal} (2015)                     & 79.3 ($\pm$ 17.1)                & 75.3 ($\pm$ 18.3)                       & 92.9 ($\pm$ \phantom{0}4.3)                               & 88.5 ($\pm$ \phantom{0}6.6)          & 58.7 ($\pm$ \phantom{0}9.8)          \\
                \textsc{Pl-Ecoc} (2017)                  & 63.7 ($\pm$ 13.1)                & 66.5 ($\pm$ 12.5)                       & 64.3 ($\pm$ 14.3)                                         & 51.7 ($\pm$ 10.7)                    & 46.2 ($\pm$ 10.2)                    \\
                \textsc{Proden} (2020)                   & \textbf{81.6} ($\pm$ 14.0)       & \textbf{78.1} ($\pm$ 13.2)              & \textbf{93.9} ($\pm$ \phantom{0}4.4)                      & 88.2 ($\pm$ \phantom{0}6.0)          & \textbf{64.2} ($\pm$ \phantom{0}8.2) \\
                \textsc{Cc} (2020)                       & \textbf{81.3} ($\pm$ 14.2)       & \textbf{78.8} ($\pm$ 13.6)              & \textbf{93.9} ($\pm$ \phantom{0}4.5)                      & \textbf{89.6} ($\pm$ \phantom{0}5.7) & 49.2 ($\pm$ 29.7)                    \\
                \textsc{Valen} (2021)                    & 79.7 ($\pm$ 15.3)                & 75.6 ($\pm$ 12.7)                       & 91.8 ($\pm$ \phantom{0}4.2)                               & 83.4 ($\pm$ \phantom{0}8.4)          & \textbf{63.6} ($\pm$ \phantom{0}9.7) \\
                \textsc{Pop} (2023)                      & \textbf{81.5} ($\pm$ 14.0)       & \textbf{78.1} ($\pm$ 13.1)              & \textbf{93.9} ($\pm$ \phantom{0}4.5)                      & 88.1 ($\pm$ \phantom{0}6.0)          & \textbf{63.6} ($\pm$ \phantom{0}8.4) \\
                \textsc{CroSel} (2024)                   & 79.9 ($\pm$ 17.2)                & \textbf{78.4} ($\pm$ 14.5)              & \textbf{94.2} ($\pm$ \phantom{0}4.5)                      & \textbf{88.9} ($\pm$ \phantom{0}6.7) & 46.3 ($\pm$ 28.9)                    \\
                \midrule
                \textsc{Dst-Pll} (ours)                  & \textbf{80.8} ($\pm$ 14.1)       & 75.4 ($\pm$ 10.7)                       & 92.0 ($\pm$ \phantom{0}4.2)                               & 84.5 ($\pm$ \phantom{0}7.2)          & 52.0 ($\pm$ 11.6)                    \\
                \bottomrule
            \end{tabular}
        \end{small}
    \end{center}
\end{table}

\begin{table}[p]
    \caption{Average empirical reject option risk $\hat{\risk}_{\lambda}(\Gamma_{g})$ as defined in~\eqref{eq:reject-risk} for various values of $\lambda$. The best algorithms (smallest risk) as well as algorithms with non-significant differences are emphasized. The significance analysis uses a paired student t-test with level $\alpha = 0.05$.}
    \label{tab:risk-tradeoff}
    \begin{center}
        \begin{small}
            \begin{tabular}{
                >{\raggedright\arraybackslash}m{0.165\linewidth}
                >{\centering\arraybackslash}m{0.1295\linewidth}
                >{\centering\arraybackslash}m{0.1295\linewidth}
                >{\centering\arraybackslash}m{0.1295\linewidth}
                >{\centering\arraybackslash}m{0.1295\linewidth}
                >{\centering\arraybackslash}m{0.1295\linewidth}
                }
                \toprule
                \multirow[c]{2}{*}[-0.075cm]{Algorithms} & \multicolumn{5}{c}{Average $\hat{\risk}_{\lambda}(\Gamma_{g})$ ($\pm$ std.) across all experimental settings}                                                                                                                     \\
                \cmidrule(ll){2-6}
                                                         & {$\lambda = 0.00$}                                                                                            & {$\lambda = 0.05$}         & {$\lambda = 0.10$}         & {$\lambda = 0.15$}         & {$\lambda = 0.20$}         \\
                \midrule
                \textsc{Pl-Knn} (2005)                   & 0.11 ($\pm$ 0.17)                                                                                             & 0.15 ($\pm$ 0.18)          & 0.18 ($\pm$ 0.19)          & 0.21 ($\pm$ 0.20)          & 0.25 ($\pm$ 0.21)          \\
                \textsc{Pl-Svm} (2008)                   & 0.19 ($\pm$ 0.27)                                                                                             & 0.23 ($\pm$ 0.28)          & 0.27 ($\pm$ 0.29)          & 0.31 ($\pm$ 0.30)          & 0.35 ($\pm$ 0.31)          \\
                \textsc{Ipal} (2015)                     & 0.19 ($\pm$ 0.17)                                                                                             & 0.20 ($\pm$ 0.18)          & 0.21 ($\pm$ 0.19)          & 0.22 ($\pm$ 0.20)          & 0.23 ($\pm$ 0.21)          \\
                \textsc{Pl-Ecoc} (2017)                  & 0.25 ($\pm$ 0.27)                                                                                             & 0.29 ($\pm$ 0.27)          & 0.34 ($\pm$ 0.28)          & 0.38 ($\pm$ 0.28)          & 0.43 ($\pm$ 0.28)          \\
                \textsc{Proden} (2020)                   & 0.12 ($\pm$ 0.11)                                                                                             & 0.13 ($\pm$ 0.11)          & 0.14 ($\pm$ 0.12)          & 0.15 ($\pm$ 0.12)          & \textbf{0.16} ($\pm$ 0.13) \\
                \textsc{Cc} (2020)                       & 0.16 ($\pm$ 0.18)                                                                                             & 0.16 ($\pm$ 0.19)          & 0.17 ($\pm$ 0.20)          & 0.18 ($\pm$ 0.20)          & 0.19 ($\pm$ 0.21)          \\
                \textsc{Valen} (2021)                    & 0.20 ($\pm$ 0.14)                                                                                             & 0.20 ($\pm$ 0.14)          & 0.20 ($\pm$ 0.14)          & 0.20 ($\pm$ 0.14)          & 0.21 ($\pm$ 0.14)          \\
                \textsc{Pop} (2023)                      & 0.12 ($\pm$ 0.11)                                                                                             & 0.13 ($\pm$ 0.11)          & 0.14 ($\pm$ 0.12)          & 0.15 ($\pm$ 0.12)          & \textbf{0.16} ($\pm$ 0.13) \\
                \textsc{CroSel} (2024)                   & 0.15 ($\pm$ 0.20)                                                                                             & 0.16 ($\pm$ 0.21)          & 0.17 ($\pm$ 0.21)          & 0.18 ($\pm$ 0.21)          & 0.19 ($\pm$ 0.22)          \\
                \midrule
                \textsc{Dst-Pll} (ours)                  & \textbf{0.05} ($\pm$ 0.07)                                                                                    & \textbf{0.07} ($\pm$ 0.08) & \textbf{0.10} ($\pm$ 0.10) & \textbf{0.12} ($\pm$ 0.11) & \textbf{0.15} ($\pm$ 0.12) \\
                \bottomrule
            \end{tabular}
        \end{small}
    \end{center}
\end{table}

\begin{table}[p]
    \caption{
        Fraction of rejects and non-rejected test-set accuracy of all methods with standard deviations across all experimental settings and five repetitions with different random seeds.
        To obtain the results, we tune the thresholds $\theta$ such that each competitor has a number of rejects that is comparable to that of our method.
    }
    \label{tab:fixedreject}
    \begin{center}
        \begin{small}
            \begin{tabular}{
                >{\raggedright\arraybackslash}m{0.23\textwidth}
                >{\centering\arraybackslash}m{0.33\textwidth}
                >{\centering\arraybackslash}m{0.33\textwidth}
                }
                \toprule
                Algorithms     & Fraction of rejects ($\pm$ std.) & Non-rejected test accuracy ($\pm$ std.)        \\
                \midrule
                Pl-Knn (2005)  & 50.19\,\% ($\pm$ 20.98\,\%)      & 91.23\,\% ($\pm$ 10.12\,\%)                    \\
                Pl-Svm (2008)  & 50.19\,\% ($\pm$ 20.98\,\%)      & 74.40\,\% ($\pm$ 19.77\,\%)                    \\
                Ipal (2015)    & 50.19\,\% ($\pm$ 20.98\,\%)      & 83.52\,\% ($\pm$ 16.08\,\%)                    \\
                Pl-Ecoc (2017) & 50.19\,\% ($\pm$ 20.98\,\%)      & 73.92\,\% ($\pm$ 17.11\,\%)                    \\
                Proden (2020)  & 50.19\,\% ($\pm$ 20.98\,\%)      & 94.03\,\% ($\pm$ \phantom{0}8.23\,\%)          \\
                Cc (2020)      & 50.19\,\% ($\pm$ 20.98\,\%)      & 90.13\,\% ($\pm$ 17.89\,\%)                    \\
                Valen (2021)   & 50.19\,\% ($\pm$ 20.98\,\%)      & 86.81\,\% ($\pm$ 13.04\,\%)                    \\
                Pop (2023)     & 50.19\,\% ($\pm$ 20.98\,\%)      & 94.02\,\% ($\pm$ \phantom{0}8.20\,\%)          \\
                CroSel (2024)  & 50.19\,\% ($\pm$ 20.98\,\%)      & 89.71\,\% ($\pm$ 18.67\,\%)                    \\
                \midrule
                Dst-Pll (ours) & 50.15\,\% ($\pm$ 21.16\,\%)      & \textbf{95.49\,\%} ($\pm$ \phantom{0}7.01\,\%) \\
                \bottomrule
            \end{tabular}
        \end{small}
    \end{center}
\end{table}

\subsection{Experimental Setup}
\label{sec:exp-setup}

\textbf{Data.}
Following the default protocol \citep{CourST11,ZhangY15a,LvXF0GS20,0009LLQG23}, we conduct several experiments using datasets for supervised learning with added artificial noise as well as experiments on real-world partially-labeled data.
We repeat all experiments five times to report averages and standard deviations.
For the supervised datasets, we use the \emph{ecoli} \citep{uci-HortonN96}, \emph{multiple-features} \citep{uci-mfeat}, \emph{pen-digits} \citep{uci-pendigits}, \emph{semeion} \citep{uci-semeion}, \emph{solar-flare} \citep{uci-flare}, \emph{statlog-landsat} \citep{uci-statlog}, and \emph{theorem} datasets \citep{uci-theorem} from the UCI repository \citep{bache2013uci}.
These datasets contain between \SI{336}{} and \SI{10992}{} instances each.
Also, we use the popular \emph{MNIST} \citep{uci-mnist}, \emph{KMNIST} \citep{uci-kmnist}, and \emph{FMNIST} datasets \citep{uci-fmnist}, which contain \SI{60000}{} images each similar to other datasets like \textsc{Cifar-10} and \textsc{Cifar-100} \citep{Krizhevsky2009LearningML}.
For the partially labeled data, we use the \emph{bird-song} \citep{BriggsFR12}, \emph{flickr} \citep{HuiskesL08}, \emph{yahoo-news} \citep{GuillauminVS10}, and \emph{msrc-v2} datasets \citep{LiuD12}.
They contain between \SI{1755}{} and \SI{22762}{} instances.

\textbf{Noise Generation.}
We use three noise generation strategies to partially label the supervised datasets: uniform, class-dependent, and instance-dependent noise.
Uniform noise \citep{LiuD12} adds three uniform random noise labels to a fraction of \SI{70}{\%} of all instances.
Class-dependent noise \citep{CourST11} randomly partitions all class labels into pairs and adds the partner label as noise to \SI{70}{\%} of all instances having the other label.
Instance-dependent noise \citep{ZhangZW0021} first trains a supervised classifier $g : \featspace \to \measures_{1}^{+}\!\left(\labelspace, 2^{\labelspace} \right)$.
Given an instance $x$ with true label $y$, a flipping probability of $\xi_{\bar{y}}(x) := g_{\bar{y}}(x) / \max_{y' \in \labelspace \setminus \lbrace y \rbrace} g_{y'}(x)$ for $\bar{y} \neq y$ determines which noise labels to randomly pick.

\subsection{Results}
\label{sec:exp-results}

\begin{figure}[t]
    \centering
    \hspace{-0.03cm}\includegraphics{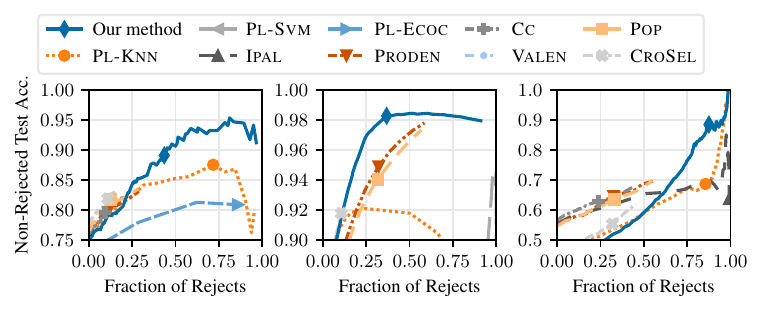}
    \\[-0.5em] \hspace{-0.4cm} (a) \emph{Ecoli} (inst.-dep. noise) \hspace{0.2cm} (b) \emph{KMNIST} (inst.-dep. noise) \hspace{0.2cm} (c) \emph{msrc-v2} \hspace{0.2cm}~
    \caption{
        Trade-off between the fraction of rejected predictions and the accuracy of non-rejected predictions for three experiments: \emph{Ecoli} with instance-dependent noise, \emph{KMNIST} with instance-dependent noise, and the real-world dataset \emph{msrc-v2}.
        We show the trade-off curves for varying confidence (0 to 1) and $\Delta_{\predbpa}$ (-1 to 1) thresholds.
        We highlight the points corresponding to a threshold of $\Delta_{\predbpa} = 0$ for our method, a confidence threshold of \SI{90}{\%} for methods with a probability output, and a threshold of \SI{50}{\%} of all votes for \textsc{Pl-Knn}.
        We refer to Appendix~\ref{sec:app-tradeoff-curves} for all reject trade-off curves across all experimental settings.
    }
    \label{fig:reject-tradeoff}
\end{figure}

\textbf{Prediction Performance.}
Table~\ref{tab:accuracies} shows the average test-set accuracies and standard deviations over all UCI datasets with class- and instance-dependent noise, MNIST datasets with class- and instance-dependent noise, and the real-world datasets.
The algorithms with the highest accuracies as well as the algorithms with non-significant differences using a paired t-test with level $\alpha = 0.05$ are emphasized.
Our approach (\textsc{Dst-Pll}) performs comparably to the other methods.
We note that none of the methods is best across all settings.
For example, \textsc{Cc} performs best in four out of five settings but is significantly outperformed by our approach on the real-world experiments.

\textbf{Reject Option.}
To compare our reject option with the other methods, we use a threshold of $\Delta_{\predbpa} > 0$ for our proposed approach (Algorithm~\ref{alg:dst-pll-ind}, Line~12), a confidence threshold of \SI{90}{\%} for classifiers outputting a probability distribution over the class labels, and a threshold of \SI{50}{\%} of all votes for \textsc{Pl-Knn} to not reject a prediction, which is in line with the reject option by \citet{Hellman70}.

Table~\ref{tab:risk-tradeoff} shows the average empirical reject-option risk and standard deviation across all experiments for varying $\lambda$.
We compute $\hat{\risk}_{\lambda}(\Gamma_{g}) = \hat{\err}(\Gamma_{g}) + \lambda \hat{\rate}(\Gamma_{g})$ by using ground-truth information to calculate the non-reject error $\hat{\err}(\Gamma_{g})$ and counting the number of rejects to calculate the reject rate $\hat{\rate}(\Gamma_{g})$.
The algorithms with the lowest risks as well as the algorithms with non-significant differences using a paired t-test with level $\alpha = 0.05$ are emphasized.
When misclassification is costly ($\lambda \leq 0.2$), our method provides the significantly best trade-off compared to our competitors.
In contrast, when rejecting predictions is costly ($\lambda > 0.2$), the methods in Table~\ref{tab:accuracies} are to be preferred.

Table~\ref{tab:fixedreject} shows the non-rejected test accuracy of all methods across all experimental settings for a fixed fraction of rejects.
To obtain the results, we tune the confidence thresholds $\theta \in [0, 1]$ such that each competitor rejects a similar number of instances as our proposed approach.
Our approach uses the threshold $\Delta > 0$, for which we have proved several desirable guarantees in Theorem~\ref{theorem:reject}.
Our method achieves superior test-set accuracy on the non-rejected predictions.

Figure~\ref{fig:reject-tradeoff} shows the reject trade-off for varying confidence (0 to 1) and $\Delta_{\predbpa}$ (-1 to 1) thresholds on the \emph{ecoli} and \emph{KMNIST} datasets with instance-dependent noise as well as on the \emph{msrc-v2} real-world dataset.
The x-axes show the fractions of predictions that are rejected.
The y-axes show the accuracies of predictions that are not rejected.
The plots show (fraction of rejects, non-rejected test-set accuracy)-pairs corresponding to different settings of the thresholds.
Most methods have monotonic growth: The more predictions are rejected, the more accurate are non-rejected predictions.
Also, it is desirable to be close to the top-left corner of the plots as one wants to achieve high accuracy while rejecting as few predictions as possible.
Note that the point (1, 1) cannot be observed as the test-set accuracy is undefined if all predictions are rejected.

Given a desired rejection rate $\hat{\rate}(\Gamma_{g})$, Figure~\ref{fig:reject-tradeoff} also allows for numerically finding the appropriate value of $\lambda$ that minimizes $\hat{\risk}_{\lambda}(\Gamma_{g})$.
Varying $\lambda$ in~\eqref{eq:reject-risk}, while having $\hat{\err}(\Gamma_{g})$ and $\hat{\rate}(\Gamma_{g})$ fixed, yields a straight line in Figure~\ref{fig:reject-tradeoff} showing all possible trade-offs.

Our method provides the significantly best trade-offs compared to competitors (Table~\ref{tab:risk-tradeoff}).
These results empirically demonstrate our method's superiority in reliably rejecting unsure predictions.

\section{Conclusions}
\label{sec:conclusions}
When misclassification is costly, reject options provide a principled way of alleviating the consequences of incorrect predictions.
In this work, we have presented a novel nearest-neighbor-based partial-label learning algorithm that can reject predictions if uncertain.
We have demonstrated the desirable properties of the proposed reject option both from a theoretical (Theorem~\ref{theorem:reject}) and practical perspective.
Our wide range of experiments showed the effectiveness of our classification rule and reject option on supervised datasets with added artificial noise and partially labeled real-world datasets.

%
%

\subsubsection*{Acknowledgments}
We thank the anonymous reviewers for their constructive comments. This work was supported by the German Research Foundation (DFG) Research Training Group GRK 2153: \emph{Energy Status Data --- Informatics Methods for its Collection, Analysis and Exploitation} and by the KiKIT (The Pilot Program for Core-Informatics at the KIT) of the Helmholtz Association. 

%
%

\bibliographystyle{tmlr}
\bibliography{references}

%
%

\appendix
\newpage
\section{Proofs}
\label{sec:proofs}
This section collects our proofs.
Appendix~\ref{sec:proof-confident} contains the proof of Theorem~\ref{theorem:reject}.
The proof of Lemma~\ref{lemma:expectedbelief12} is in Appendix~\ref{sec:proof-expected-belief}.
We prove our main statement, Theorem~\ref{theorem:consistency}, in Appendix~\ref{sec:proof-main-statement}.

\subsection{Proof of Theorem~\ref{theorem:reject}}
\label{sec:proof-confident}
\textbf{Part $(i)$.}
Given an instance $\tilde{x} \in \featspace$ with its candidate set $\tilde{s} \subseteq \labelspace$ and its associated prediction $g(\tilde{x})$ as described in Algorithm~\ref{alg:dst-pll-ind}, we assume that $g(\tilde{x})$ has been picked randomly from $\arg\max_{A \subseteq \tilde{s}}\, \predbpa(A)$ (Algorithm~\ref{alg:dst-pll-ind}, Line~11, second case), so $\max_{y \in \tilde{s}}\, \predbpa(\left\lbrace y \right\rbrace)$ must be zero because we are not in the first case in Line~11.
Therefore, $\bel_{\predbpa}(\lbrace y \rbrace) \stackrel{\eqref{eq:bel-pl}}{=} \predbpa(\left\lbrace y \right\rbrace) = 0$ for all $y \in \labelspace$.
Substitution in $\Delta_{\predbpa}$ yields
\begin{align}
    \Delta_{\predbpa}
    = \bel_{\predbpa}(\lbrace y \rbrace) - \max_{y' \in \tilde{s} \setminus \lbrace y \rbrace} \pl_{\predbpa}(\lbrace y' \rbrace)
    = 0 - \max_{y' \in \tilde{s} \setminus \lbrace y \rbrace} \pl_{\predbpa}(\lbrace y' \rbrace) \leq 0 \text{,}
\end{align}
independent of the choice of $y \in \labelspace$.
Therefore, $\Delta_{\predbpa} \leq 0$.

\textbf{Part $(ii)$.}
Given an instance $\tilde{x} \in \featspace$ with its candidate set $\tilde{s} \subseteq \labelspace$ and its associated prediction $g(\tilde{x})$ as described in Algorithm~\ref{alg:dst-pll-ind}, we assume that $\Delta_{\predbpa} > 0$.
By the contraposition of part $(i)$, $g(\tilde{x}) = \hat{y}$ with $\hat{y} = \arg\max_{y \in \tilde{s}}\, \predbpa(\left\lbrace y \right\rbrace)$.
From \eqref{eq:delta} and $\Delta_{\predbpa} > 0$, it follows that $\bel_{\predbpa}(\lbrace \hat{y} \rbrace) > \max_{y \in \tilde{s} \setminus \lbrace \hat{y} \rbrace} \pl_{\predbpa}(\lbrace y \rbrace)$, which is equivalent to $\bel_{\predbpa}(\lbrace \hat{y} \rbrace) > \pl_{\predbpa}(\lbrace y \rbrace)$ for all $y \in \tilde{s} \setminus \lbrace \hat{y} \rbrace$.
For all $\prob \in \credal_{\predbpa}(\labelspace, 2^{\labelspace})$, it holds that $\bel_{\predbpa}\!(A) \leq \prob( A ) \leq \pl_{\predbpa}(A)$ for all $A \subseteq \labelspace$ by~\eqref{eq:credal}.
Therefore, $\prob(\lbrace y \rbrace) \leq \pl_{\predbpa}(\lbrace y \rbrace) < \bel_{\predbpa}(\lbrace \hat{y} \rbrace) \leq \prob(\lbrace \hat{y} \rbrace)$ for all $y \in \tilde{s} \setminus \lbrace \hat{y} \rbrace$.

\textbf{Part $(iii)$.}
($\Rightarrow$): Given an instance $\tilde{x} \in \featspace$ with its candidate set $\tilde{s} \subseteq \labelspace$ and its associated prediction $g(\tilde{x})$ as described in Algorithm~\ref{alg:dst-pll-ind}, we assume that $\predbpa(\lbrace g(\tilde{x}) \rbrace) > 1/2$.
Because $\max_{y \in \tilde{s}}\, \predbpa(\left\lbrace y \right\rbrace) > 0$, we are in the first case in Line~11 of Algorithm~\ref{alg:dst-pll-ind}.
Therefore, $g(\tilde{x}) = \hat{y}$ with $\hat{y} = \arg\max_{y \in \tilde{s}}\, \predbpa(\left\lbrace y \right\rbrace)$.
As $\sum_{A \subseteq \labelspace} \predbpa\!\left( A \right) = 1$ and $\predbpa(\lbrace \hat{y} \rbrace) > 1/2$, it holds that $\sum_{A \subseteq \labelspace,\, A \neq \left\lbrace \hat{y} \right\rbrace} \predbpa\!\left( A \right) < 1/2$.
Then, for all $y \in \tilde{s}$ with $y \neq \hat{y}$, $\pl_{\predbpa}\!\left(\left\lbrace y \right\rbrace\right) = \sum_{A \subseteq \labelspace,\, A \cap \left\lbrace y \right\rbrace \neq \emptyset} \predbpa\!\left(A\right) < 1/2$.
Hence, $\pl_{\predbpa}(\lbrace y \rbrace) < \bel_{\predbpa}(\lbrace \hat{y} \rbrace)$ for all $y \in \tilde{s}$ with $y \neq \hat{y}$.
Therefore, $\Delta_{\predbpa} = \bel_{\predbpa}(\lbrace \hat{y} \rbrace) - \max_{y \in \tilde{s} \setminus \lbrace\hat{y}\rbrace} \pl_{\predbpa}(\lbrace y \rbrace) > 0$.

($\nLeftarrow$): In the following, we provide a counter-example.
Let $\tilde{s} = \labelspace = \left\lbrace 1, 2, 3 \right\rbrace$ and $\predbpa$ be defined by $\predbpa(A) = 0.4$ if $A = \lbrace 1 \rbrace$, $\predbpa(A) = 0.3$ if $A = \lbrace 1, 2 \rbrace$ or $A = \lbrace 1, 3 \rbrace$, else $\predbpa(A) = 0$.
Then, $y = 1$ is our prediction since it has the highest probability mass.
The prediction is not rejected because $\Delta_{\predbpa} = 0.4 - 0.3 = 0.1 > 0$.
However, $\predbpa(\lbrace y \rbrace) < 1/2$.

\subsection{Proof of Lemma~\ref{lemma:expectedbelief12}}
\label{sec:proof-expected-belief}

\textbf{Part $(i)$.}
The expected probability mass of $\tilde{y}$ is given by \eqref{eq:exp-belief}.
The non-negativity of the binomial coefficients implies that it is sufficient to show that
\begin{equation}
    \binom{k - h}{i} \prod_{a = 1}^{l - 2} \binom{k - h}{j_a} \geq \sum_{b = 1}^{\min\left(i, j_1, \dots, j_{l-2}\right)} \left(-1\right)^{b + 1} \binom{k - h}{b} \binom{k - h - b}{i - b} \prod_{c = 1}^{l - 2} \binom{k - h - b}{j_c - b} \text{,} \label{eq:belief-positive}
\end{equation}
where $k \in \mathbb{N}$, $h, i, j_a \in \mathbb{N}_0$ with $0 \leq i < k - h$ and $0 \leq j_a < k - h$ with $a \in [l - 2]$, and that the inequality is strict in at least one case.

The proof of \eqref{eq:belief-positive} proceeds by induction over $k$.
Note that the choices of $i$ and $j_a$ are interchangeable (see Appendix~\ref{sec:aux-results}).
Hence, w.l.o.g., we assume for the remainder of this proof that $i \leq j_a$ for all $a \in \left[l - 2\right]$.
Further, we set $k' = k - h \in \mathbb{N}$.

\emph{Proposition.}
For $k' \in \mathbb{N}$, it holds that
\begin{equation}
    \binom{k'}{i} \prod_{a = 1}^{l - 2} \binom{k'}{j_a} \geq \sum_{b = 1}^{i} \left(-1\right)^{b + 1} \binom{k'}{b} \binom{k' - b}{i - b} \prod_{c = 1}^{l - 2} \binom{k' - b}{j_c - b} \text{,}
\end{equation}
with $0 \leq i \leq j_a < k'$ for all $a \in \left[l - 2\right]$.

\emph{Base case.}
For $k' = 1$, it holds that $h = i = j_{a} = 0$ ($a \in \left[l - 2\right]$).
Then, all binomial coefficients on the l.h.s.\ are $1$. The sum on the r.h.s.\ is empty and thus $0$. Hence, l.h.s.\ $>$ r.h.s. ($1>0$), which also renders~\eqref{eq:exp-belief} strictly positive.

\emph{Induction step.}
Let $k'=k'+1$. The l.h.s.\ yields
\begin{align}
                          & \ \binom{k' + 1}{i} \prod_{a = 1}^{l - 2} \binom{k' + 1}{j_a}
    \stackrel{(i)}{=} \left( \frac{k' + 1}{k' + 1 - i} \prod_{a = 1}^{l - 2} \frac{k' + 1}{k' + 1 - j_a} \right) \binom{k'}{i} \prod_{a = 1}^{l - 2} \binom{k'}{j_a}                                                                                                                                                                                                                                                                    \\
    \stackrel{(ii)}{\geq} & \ \left( \frac{k' + 1}{k' + 1 - i} \prod_{a = 1}^{l - 2} \frac{k' + 1}{k' + 1 - j_a} \right) \sum_{b = 1}^{i} \left(-1\right)^{b + 1} \binom{k'}{b} \binom{k' - b}{i - b} \prod_{c = 1}^{l - 2} \binom{k' - b}{j_c - b}                                                                                                                                                                                     \\
    \stackrel{(iii)}{=}   & \ \sum_{b = 1}^{i}\left(-1\right)^{b + 1} \frac{k' + 1}{k' + 1 - i}    \binom{k'}{b} \binom{k' - b}{i - b} \prod_{c = 1}^{l - 2} \frac{k' + 1}{k' + 1 - j_c} \binom{k' - b}{j_c - b}                                                                                                                                                                                                                        \\
    \stackrel{(iv)}{=}    & \ \sum_{b = 1}^{i} \left(-1\right)^{b + 1} \underbrace{\frac{k' + 1}{k' + 1 - b} \binom{k'}{b}}_{\stackrel{(i')}{=}\binom{k'+1}{b}} \underbrace{\frac{k' + 1 - b}{k' + 1 - i} \binom{k' - b}{i - b}}_{\stackrel{(i')}{=}\binom{k'+1-b}{i-b}} \prod_{c = 1}^{l - 2} \frac{k' + 1}{k' + 1 - b} \underbrace{\frac{k' + 1 - b}{k' + 1 - j_c} \binom{k' - b}{j_c - b}}_{\stackrel{(i')}{=}\binom{k'+1-b}{j_c-b}} \\
    \stackrel{(v)}{=}     & \ \sum_{b = 1}^{i} \left(-1\right)^{b + 1} \left(\frac{k' + 1}{k' + 1 - b}\right)^{l-2} \binom{k' + 1}{b} \binom{k' + 1 - b}{i - b} \prod_{c = 1}^{l - 2} \binom{k' + 1 - b}{j_c - b}                                                                                                                                                                                                                       \\
    \stackrel{(vi)}{\geq} & \ \sum_{b = 1}^{i} \left(-1\right)^{b + 1} \binom{k' + 1}{b} \binom{k' + 1 - b}{i - b} \prod_{c = 1}^{l - 2} \binom{k' + 1 - b}{j_c - b} \text{,}
\end{align}
where $(i)$ holds as
\begin{align*}
    \binom{k'+1}{i} = \frac{(k'+1)!}{i!(k'+1-i)!} = \frac{k'+1}{k'+1-i}\frac{k'!}{i!(k'-i)!} = \frac{k'+1}{k'+1-i}\binom{k'}{i},
\end{align*}
the proposition yields $(ii)$, in $(iii)$ we rearrange the sum and collect both products, and $(iv)$ expands the fractions. In $(v)$, we rearrange the factor within the product and apply the $(i')$-s, which hold by similar calculations as $(i)$, and Lemma~\ref{lemma:aux} with $k=k'+1$ implies $(vi)$. These derivations and the observation that the base case guarantees the inequality to be strict conclude the proof.

\textbf{Part $(ii)$.}
By comparing the closed-form expressions (with $m=i=t$) of  $\tilde{y}$ and $\tilde{y}_{\operatorname{c}}$ given in~\eqref{eq:exp-belief} and ~\eqref{eq:exp-most-cooccurring}, respectively, we obtain
\begin{align}
                    & \ \mathbb{E}_{\prob}\!\left[ \predbpa( \left\lbrace \tilde{y} \right\rbrace ) \mid X = \tilde{x} \right] > \mathbb{E}_{\prob}\!\left[ \predbpa( \lbrace \tilde{y}_{\operatorname{c}}  \rbrace ) \mid X = \tilde{x} \right] \\
    \Leftrightarrow & \ \left( \frac{1}{2^{l-2}} p_2 \right)^{k - h - m} > \left( \frac{1}{2^{l-1} - 1} p_3 \right)^{k - h - m}                                                                                                                  \\
    \Leftrightarrow & \ \frac{1}{2^{l-2}} p_2 > \frac{1}{2^{l-1} - 1} p_3  \ \stackrel{(i)}{\Leftrightarrow}\ \frac{2^{l-1} - 1}{2^{l - 2}} > 1
    \ \Leftrightarrow\ 2^{l - 2} > 1 \text{,}
\end{align}
where $(i)$ holds as Assumption~\ref{def:assumptions} guarantees that $p_2 \geq p_3 > 0$. The last statement is satisfied for $l \geq 3$, which shows that the dominance assumption w.r.t.\ the label distribution propagates through the computation of belief when using Yager's rule.
We note that the result is independent of $p_1$ as a candidate set can not distinguish the belief in $\tilde{y}$ and $\tilde{y}_{\operatorname{c}}$ when these co-occur.

\subsection{Proof of Theorem~\ref{theorem:consistency}}
\label{sec:proof-main-statement}
With Assumption~\ref{def:assumptions}, Theorem~\ref{theorem:consistency} is a consequence of Lemma~\ref{lemma:expectedbelief12}.
We detail this in the following.

\textbf{Part $(i)$.}
Let $\tilde{x} \in \featspace$ be a fixed instance with unknown true label $\tilde{y}$.
Our algorithm is trained with $n$ samples $\left(\left(x_i, s_i\right)\right)_{i=1}^{n} \overset{\text{i.i.d.}}{\sim} \prob_{XS}$ that satisfy the label distribution in Assumption~\ref{def:assumptions}.
Especially, the dominance of the true label $\tilde{y}$ within $\tilde{x}$'s neighborhood of $k$ instances $x_i$ holds.
Since the expected probability mass of $\tilde{y}$ is positive (Lemma~\ref{lemma:expectedbelief12}; part $(i)$), that is, $\expected_{\prob}[ \predbpa( \left\lbrace \tilde{y} \right\rbrace ) \mid X = \tilde{x} ] > 0$, we are, in expectation, in the first case of our two-case decision rule (Algorithm~\ref{alg:dst-pll-ind}, Line~11).
Furthermore, label $\tilde{y}$ receives, in expectation, the maximum probability mass among all labels because the dominance of label $\tilde{y}$ in the neighborhood propagates to the label masses by Lemma~\ref{lemma:expectedbelief12} (part $(ii)$), that is, $\mathbb{E}_{\prob}[ \predbpa( \left\lbrace \tilde{y} \right\rbrace ) \mid X = \tilde{x} ] > \mathbb{E}_{\prob}[ \predbpa( \lbrace \tilde{y}_{\operatorname{c}}  \rbrace ) \mid X = \tilde{x} ]$.
Therefore, in expectation, it must hold that, given an instance $\tilde{x}$, we consistently predict the true label $\tilde{y}$, that is, $\expected_{((x_i, s_i))_{i=1}^{n} \overset{\text{i.i.d.}}{\sim}\, \prob_{XS}} g_n(\tilde{x}) = \tilde{y}$.

\textbf{Part $(ii)$.}
Given $n$ samples $\left(\left(x_i, s_i\right)\right)_{i=1}^{n} \overset{\text{i.i.d.}}{\sim} \prob_{XS}$, the classifier $g_n$ trained by Algorithm~\ref{alg:dst-pll-ind} from the $n$ samples, and the minimum loss $\loss(g(x), s) = \min_{y \in s} \mathds{1}_{\lbrace g(x) \neq y \rbrace}$ \citep{LvXF0GS20,FengL0X0G0S20}, the expected empirical risk is
\begin{equation}
    \expected_{\prob}[\hat{\risk}(g_n)] = \expected_{\prob}\!\left[ \frac{1}{n} \sum_{i = 1}^{n} \loss(g_n(x_i), s_i) \right] = \frac{1}{n} \sum_{i = 1}^{n} \expected_{\prob}[\loss(g_n(x_i), s_i) ] \overset{(i)}{=} 0 \text{,}
\end{equation}
which is zero by part~$(i)$ of the statement as, in expectation, we consistently predict the true label $y_i$ for any fixed label $x_i$.

By the law of large numbers, we have
\begin{equation}
    \lim_{n \to \infty} \hat{\risk}(g_n) = \lim_{n \to \infty} \frac{1}{n} \sum_{i = 1}^{n} \loss(g_n(x_i), s_i) = 0 \ \ \text{almost surely,}
\end{equation}
because $\expected_{\prob}[\hat{\risk}(g)] = 0$ as stated above.
As $\risk\!\left(g^*\right) \geq 0$, we conclude
\begin{equation}
    0 \leq \lim_{n \to \infty} \bigl( \hat{\risk}(g_n) - \risk(g^*) \bigr) \leq \lim_{n \to \infty} \hat{\risk}(g_n) = 0 \ \ \text{almost surely,}
\end{equation}
which establishes $(ii)$.

\section{Auxiliary Results}
\label{sec:aux-results}
This section lists results underpinning the findings in Appendix~\ref{sec:proofs}.
Section~\ref{sec:simulation-belief} introduces a closed-form expression of the expected probability mass.
Section~\ref{sec:aux-lemma} presents Lemma~\ref{lemma:aux}, which is needed in the proof of Lemma~\ref{lemma:expectedbelief12}.

\subsection{Expected Probability Mass}
\label{sec:simulation-belief}
This section, first, presents a closed-form expression for the expected probability mass of $\tilde{y}$ and $\tilde{y}_{\operatorname{c}}$.
Second, we compare the computation with a simulation to numerically validate our closed-form expression and the results in Appendix~\ref{sec:proofs}.

\textbf{Closed-form expression.}
Let $\tilde{x} \in \featspace$ be the instance of interest with hidden true label $\tilde{y} \in \labelspace$ and $k$ associated neighbors $\left(x_i, s_i\right) \in \nn_{k}\!\left(\tilde{x}\right)$.
We recall that our label space $\labelspace$ has $l \geq 3$ classes.
Then, the expected probability mass of $\tilde{y}$ given instance $\tilde{x}$ is
\begin{align}
                       & \                                                                                                                                                                                                                            %
    \expected_{\left(X, Y, S\right) \sim \prob}\!\big[
        \predbpa\!\left( \left\lbrace \tilde{y} \right\rbrace \right)
        \mid X = \tilde{x}
        \big]
    \overset{(i)}{=}
    \expected_{\left(X, Y, S\right) \sim \prob}\!\big[
        \yagerq\!\left( \left\lbrace \tilde{y} \right\rbrace \right)
        \mid X = \tilde{x}
    \big]                                                                                                                                                                                                                                             \\ 
    \overset{(ii)}{=}  & \                                                                                                                                                                                                                            %
    \expected_{\left(X, Y, S\right) \sim \prob}\!\Big[
        \sum_{
    \substack{A_1,\ldots,A_k \subseteq \labelspace                                                                                                                                                                                                    \\
                \bigcap_{i = 1}^{k} A_i = \left\lbrace \tilde{y} \right\rbrace}
        } \ \prod_{j = 1}^{k} \bpa_{j}\!\left(A_j\right)
        \mid X = \tilde{x}
    \Big]                                                                                                                                                                                                                                             \\ 
    \overset{(iii)}{=} & \                                                                                                                                                                                                                            %
    \underbrace{\sum_{h = 0}^{k - 1} \sum_{i = 0}^{k - h - 1} \sum_{j_1 = 0}^{k - h - 1} \sum_{j_2 = 0}^{k - h - 1} \dotsi \sum_{j_{l-2} = 0}^{k - h - 1}
    \!\!\!\!\!\!\! \sum_{\substack{
    A_1, \ldots, A_k \subseteq \labelspace \text{;}                                                                                                                                                                                                   \\
    B_1, \ldots, B_l \subseteq \left[k\right]                                                                                                                                                                                                         \\
    \text{with}\ |B_1| = h, |B_2| = i,                                                                                                                                                                                                                \\
    |B_{q+2}| = j_q \ \text{for}\ q \in \left[l - 2\right] \text{;}                                                                                                                                                                                   \\
    \tilde{y} \,\in\, A_r \ \text{for}\ r \,\in\, \left[k\right] \text{;}                                                                                                                                                                             \\
    A_r \,=\, \labelspace \ \text{for}\ r \,\in\, B_1 \text{;}                                                                                                                                                                                        \\
    \tilde{y}_{\operatorname{c}} \in\, A_r \ \text{for}\ r \,\in\, B_2 \text{;}                                                                                                                                                                       \\
    \tilde{y}_{q + 2} \in\, A_{r_q} \ \text{for}\ r_q \,\in\, B_{q + 2}                                                                                                                                                                               \\
    \text{and}\ q \,\in\, \left[l - 2\right]
    }}}_{(iv)} \!\!\!\!\!\!
    \underbrace{\big(
        \prod_{j = 1}^{k} \bpa_{j}\!\left(A_j\right)
        \big)}_{(v)} \, \underbrace{\big(
        \prod_{j = 1}^{k} \prob\!\left( S = A_j, Y = \tilde{y} \mid X = x_j \right)
    \big)}_{(vi)}                                                                                                                                                                                                                                     \\ 
    =                  & \                                                                                                                                                                                                                            %
    \sum_{h = 0}^{k - 1} \sum_{i = 0}^{k - h - 1} \sum_{j_1 = 0}^{k - h - 1} \sum_{j_2 = 0}^{k - h - 1} \dotsi \sum_{j_{l-2} = 0}^{k - h - 1}
    \!\!\!\!\!\!\! \underbrace{\sum_{\substack{
    A_1, \ldots, A_k \subseteq \labelspace \text{;}                                                                                                                                                                                                   \\
    B_1, \ldots, B_l \subseteq \left[k\right]                                                                                                                                                                                                         \\
    \text{with}\ |B_1| = h, |B_2| = i,                                                                                                                                                                                                                \\
    |B_{q+2}| = j_q \ \text{for}\ q \in \left[l - 2\right] \text{;}                                                                                                                                                                                   \\
    \tilde{y} \,\in\, A_r \ \text{for}\ r \,\in\, \left[k\right] \text{;}                                                                                                                                                                             \\
    A_r \,=\, \labelspace \ \text{for}\ r \,\in\, B_1 \text{;}                                                                                                                                                                                        \\
    \tilde{y}_{\operatorname{c}} \in\, A_r \ \text{for}\ r \,\in\, B_2 \text{;}                                                                                                                                                                       \\
    \tilde{y}_{q + 2} \in\, A_{r_q} \ \text{for}\ r_q \,\in\, B_{q + 2}                                                                                                                                                                               \\
    \text{and}\ q \,\in\, \left[l - 2\right]
    }}}_{(vii)} \!\!\!\!\!\!
    \underbrace{\frac{1}{2^k}}_{(vi)}
    \underbrace{\left( \frac{1}{2^{l-2} - 1} p_1 \right)^{i}
    \left( \frac{1}{2^{l-2}} p_2 \right)^{k - h - i}}_{(vi)}                                                                                                                                                                                          \\ 
    =                  & \                                                                                                                                                                                                                            %
    \sum_{h = 0}^{k - 1} \sum_{i = 0}^{k - h - 1} \sum_{j_1 = 0}^{k - h - 1} \sum_{j_2 = 0}^{k - h - 1} \dotsi \sum_{j_{l-2} = 0}^{k - h - 1} \Bigg[
    \underbrace{\binom{k}{k - h} \binom{k - h}{i} \prod_{a = 1}^{l - 2} \binom{k - h}{j_a}}_{(vii)}                                                                                                                                                   \\
                       & \quad - \underbrace{\binom{k}{k - h} \sum_{b = 1}^{\min\left(i, j_1, j_2, \dots, j_{l-2}\right)} \left(-1\right)^{b + 1} \binom{k - h}{b} \binom{k - h - b}{i - b} \prod_{c = 1}^{l - 2} \binom{k - h - b}{j_c - b}}_{(vii)}
    \Bigg]                                                                                                                                                                                                                                            \\
                       & \ \times \frac{1}{2^k} \left( \frac{1}{2^{l-2} - 1} p_1 \right)^{i} \left( \frac{1}{2^{l-2}} p_2 \right)^{k - h - i} \label{eq:exp-belief} \text{.}
\end{align}

In the following, we show that transformations~$(i)$\,--\,$(viii)$ hold.
\begin{itemize}
    \item In~$(i)$, we apply~\eqref{eq:yagerq}.
          For any $y \in \labelspace$, it holds that $\bpa\!\left(\left\lbrace y \right\rbrace\right) = \yagerq\!\left(\left\lbrace y \right\rbrace\right)$ as $\left\lbrace y \right\rbrace \neq \emptyset$ and $\left\lbrace y \right\rbrace \neq \labelspace$.

    \item In~$(ii)$, we insert~\eqref{eq:yagerq}, that is, how all $\bpa_i$-s as defined by Algorithm~\ref{alg:dst-pll-ind} are combined into $\yagerq$ using Yager's rule~\eqref{eq:yagerq}.

    \item In~$(iii)$, we then compute the expected value by taking into account all possible combinations of sets $A_1,\ldots,A_k \subseteq \labelspace$ producing the intersection $\left\lbrace \tilde{y} \right\rbrace$.

    \item Term $(iv)$ enumerates all cases where $\bigcap_{r = 1}^{k} A_r = \left\lbrace \tilde{y} \right\rbrace$.
          To produce this intersection, label $\tilde{y}$ needs to be contained in all sets $A_r$.
          At most $k - 1$ sets can be the full label space, that is, $h$ sets satisfy $A_r = \labelspace$.
          The label $\tilde{y}_{\operatorname{c}}$ with which $\tilde{y}$ is most often confused has to be missing in at least one set $A_r$, that is, $i$ sets $A_r$ satisfy $\tilde{y}_{\operatorname{c}} \in A_r$.
          All remaining labels $\tilde{y}_{3}, \ldots, \tilde{y}_{l} \in \labelspace$ also have to be missing in at least one set $A_r$ each, that is, $j_a$ sets $A_r$ satisfy $\tilde{y}_{a + 2} \in A_r$, respectively.

    \item Term $(v)$ calculates the value that a given instantiation of all $A_1,\ldots,A_k$ has.
          Because we only sum over focal elements of the $\bpa_j$-s ($j \in \left[k\right]$) by the construction of the sum in~$(v)$, term $(vi)$ simplifies to $1/2^k$.
          Recall that all focal sets have a value of 1/2 (Algorithm~\ref{alg:dst-pll-ind}, Line~6).

    \item Expression~$(vi)$ computes the probability that a given instantiation of all $A_1,\ldots,A_k$ has.
          The variables $h$, $i$, $j_1$, \dots, $j_{l-2}$ directly allow to quantify the probabilities in a way to apply Assumption~\ref{def:assumptions}, which allows to simplify the expression.

    \item As the summands within~$(vii)$ do not depend on the $A_1,\ldots,A_k$ anymore, we can simplify by counting their possible instantiations.
          We first count the possible combinations of sets where we pick to intersect $h$-times with the full label space and $\left(k-h\right)$-times with the neighbors' candidate sets of which $i$ contain label $\tilde{y}_{\operatorname{c}}$ and $j_a$ contain the remaining labels for all $a \in \left[l - 2\right]$.
          Second, we need to subtract all combinations that produce the whole label space within at least one candidate set as the probability of $s' = \labelspace$ is zero per Assumption~\ref{def:assumptions}.
          We do this using an inclusion-exclusion strategy.
          We have $k-h$ sets with which we intersect and that are not the full label space.
          Out of those $k-h$ sets, at least one set ought to contain all possible labels leaving $\binom{k - h - 1}{i - 1}$ and $\binom{k - h - 1}{j_c - 1}$ possibilities for the other sets.
          This overcounts combinations though as multiple sets can contain all labels.
          We employ the inclusion-exclusion strategy to remove combinations that produce the same allocations of labels to sets.
\end{itemize}

Similar to~\eqref{eq:exp-belief}, we also express the expected mass of the most frequently co-occurring label $\tilde{y}_{\operatorname{c}}$ as
\begin{align}
      & \                                                                                                                                                                                                                  %
    \expected_{\left(X, Y, S\right) \sim \prob}\!\big[
        \predbpa\!\left( \left\lbrace \tilde{y}_{\operatorname{c}} \right\rbrace \right)
        \mid X = \tilde{x}
        \big]
    =
    \expected_{\left(X, Y, S\right) \sim \prob}\!\big[
        \yagerq\!\left( \left\lbrace \tilde{y}_{\operatorname{c}} \right\rbrace \right)
        \mid X = \tilde{x}
    \big]                                                                                                                                                                                                                  \\ 
    = & \                                                                                                                                                                                                                  %
    \expected_{\left(X, Y, S\right) \sim \prob}\!\Big[
    \sum_{
    \substack{A_1,\ldots,A_k \subseteq \labelspace                                                                                                                                                                         \\
    \bigcap_{i = 1}^{k} A_i = \left\lbrace \tilde{y}_{\operatorname{c}} \right\rbrace}
    } \ \prod_{j = 1}^{k} \bpa_{j}\!\left(A_j\right)
    \mid X = \tilde{x}
    \Big]                                                                                                                                                                                                                  \\ 
    = & \                                                                                                                                                                                                                  %
    \sum_{h = 0}^{k - 1} \sum_{t = 0}^{k - h - 1} \sum_{j_1 = 0}^{k - h - 1} \sum_{j_2 = 0}^{k - h - 1} \dotsi \sum_{j_{l-2} = 0}^{k - h - 1} \Bigg[
    \binom{k}{k - h} \binom{k - h}{t} \prod_{a = 1}^{l - 2} \binom{k - h}{j_a}                                                                                                                                             \\
      & \quad - { \binom{k}{k - h} \sum_{b = 1}^{\min\left(t, j_1, j_2, \dots, j_{l-2}\right)} \left(-1\right)^{b + 1} \binom{k - h}{b} \binom{k - h - b}{t - b} \prod_{c = 1}^{l - 2} \binom{k - h - b}{j_c - b} } \Bigg] \\
      & \ \times { \frac{1}{2^k} \left( \frac{1}{2^{l-2} - 1} p_1 \right)^{t} \left( \frac{1}{2^{l-1} - 1} p_3 \right)^{k - h - t} } \text{.}
    \label{eq:exp-most-cooccurring}
\end{align}

To produce the intersection $\lbrace \tilde{y}_{\operatorname{c}} \rbrace$ in Yager's rule, all $k-h$ sets with which we intersect need to contain $\tilde{y}_{\operatorname{c}}$.
The variable $t$ denotes how many out of $k - h$ sets contain the true label $\tilde{y}$.
Therefore, $\tilde{y}$ and $\tilde{y}_{\operatorname{c}}$ co-occur in a set in $t$ cases, with which the probability $(\frac{1}{2^{l-2} - 1} p_1 )^{t}$ is associated.
In $k - h - t$ sets, the true label is not contained, with which the probability $( \frac{1}{2^{l-1} - 1} p_3 )^{k - h - t}$ is associated according to Assumption~\ref{def:assumptions}.

\textbf{Simulation.}
To simulate the expected probability masses, we randomly draw $k$ candidate sets $s_i$ according to the label distribution in Assumption~\ref{def:assumptions}, apply Algorithm~\ref{alg:dst-pll-ind}, and report the belief of $\tilde{y}$ and $\tilde{y}_{\operatorname{c}}$.
We repeat this \SI{100000}{} times and average the results.
For $l = 3$, $\tilde{y} = 1$, and $\tilde{y}_{\operatorname{c}} = 2$, we obtain the label distribution
\begin{align}
    \prob\!\left(S = \left\lbrace 1, 2 \right\rbrace, Y = 1 \mid X = x_i\right)                                                                               & = \phantom{\frac{1}{2}}\, p_1 \text{,} \\
    \prob\!\left(S = \left\lbrace 1 \right\rbrace, Y = 1 \mid X = x_i\right) = \prob\!\left(S = \left\lbrace 1, 3 \right\rbrace, Y = 1 \mid X = x_i\right)    & = \frac{1}{2}\, p_2 \text{,}           \\
    \prob\!\left(S = \left\lbrace 2 \right\rbrace, Y = 2 \mid X = x_i\right) = \prob\!\left(S = \left\lbrace 3 \right\rbrace, Y = 3 \mid X = x_i\right)       & = \frac{1}{3}\, p_3 \text{,}           \\
    \prob\!\left(S = \left\lbrace 2, 3 \right\rbrace, Y = 2 \mid X = x_i\right) = \prob\!\left(S = \left\lbrace 2, 3 \right\rbrace, Y = 3 \mid X = x_i\right) & = \frac{1}{6}\, p_3 \text{,}
\end{align}
regarding a neighboring instance $x_i$ with $i \in \left[k\right]$.
All other probabilities that are not listed are zero.
Naturally, $\sum_{s' \subseteq \labelspace} \sum_{y' \in \labelspace} \prob\!\left(S = s', Y = y' \mid X = x_i\right) = p_1 + 2 \frac{1}{2} p_2 + 2 \frac{1}{3} p_3 + 2 \frac{1}{6} p_3 = 1$.

Figure~\ref{fig:simulation-belief} compares the computation and simulation of the expected probability masses.
The values of the computation and simulation are indiscernible.
Note that the expected probability mass of any single-item set converges towards zero as $k$ increases.
The more sets one intersects with in~\eqref{eq:yagerq}, the more likely it is to produce the empty set.
As shown in Lemma~\ref{lemma:expectedbelief12} (part~$(ii)$), the expected probability mass of $\tilde{y}$ is greater than the mass of $\tilde{y}_{\operatorname{c}}$, independently from the number of neighbors $k \in \mathbb{N}$.

\subsection{Auxiliary Lemma to the Proof of Lemma~\ref{lemma:expectedbelief12}}
\label{sec:aux-lemma}
This section presents an auxiliary inequality, which is used in the proof of Lemma~\ref{lemma:expectedbelief12}.

\begin{lemma}
    Let $l \in \mathbb{N}$ with $l \geq 3$ denote the number of class labels and $k \in \mathbb{N}$ the number of neighbors.
    Let further $i, j_a \in \mathbb{N}_0$ with $0 \leq i \leq j_a < k$ for all $a \in \left[l - 2\right]$.
    Then, it holds that
    \begin{align}
        \sum_{b = 1}^{i} \left(-1\right)^{b + 1} \left(\frac{k}{k - b}\right)^{l-2} \binom{k}{b} \binom{k - b}{i - b} \prod_{c = 1}^{l - 2} \binom{k - b}{j_c - b} \ \geq\  \sum_{b = 1}^{i} \left(-1\right)^{b + 1} \binom{k}{b} \binom{k - b}{i - b} \prod_{c = 1}^{l - 2} \binom{k - b}{j_c - b} \text{.}
    \end{align}
    \label{lemma:aux}
\end{lemma}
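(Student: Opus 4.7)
The plan is to establish the inequality by rewriting both sums in closed form via generating functions and then reducing to a structured sub-inequality that admits a manifestly non-negative representation. First, using $\binom{k}{b}\binom{k-b}{i-b} = \binom{k}{i}\binom{i}{b}$ together with the identity $\frac{k}{k-b}\binom{k-b}{j_c-b} = \frac{k}{k-j_c}\binom{k-b-1}{j_c-b}$ applied to each of the $l-2$ product factors, and invoking the Vandermonde-style generating-function identity
\begin{equation}
    \sum_{b=0}^{i}(-1)^{b}\binom{i}{b}\prod_{c=1}^{l-2}\binom{k-b}{j_c-b} = \Big[\prod_{c=1}^{l-2} y_c^{k-j_c}\Big]\prod_{c=1}^{l-2}(1+y_c)^{k-i}\Big(\prod_{c=1}^{l-2}(1+y_c) - 1\Big)^{i} =: P(k),\notag
\end{equation}
the right-hand side of Lemma~\ref{lemma:aux} equals $\binom{k}{i}\big[\prod_{c}\binom{k}{j_c} - P(k)\big]$ and the left-hand side equals $\frac{k^{l-2}\binom{k}{i}}{\prod_{c}(k-j_c)}\big[\prod_{c}\binom{k-1}{j_c} - P(k-1)\big]$. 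Using $\binom{k}{j_c} = \frac{k}{k-j_c}\binom{k-1}{j_c}$ to cancel the leading terms, the lemma is equivalent to $P(k)\prod_{c}(k-j_c) \geq k^{l-2}\,P(k-1)$.

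Next, expanding $(\prod_{c}(1+y_c)-1)^i$ by the binomial theorem and extracting coefficients gives $P(k) = \sum_{t=0}^{i}(-1)^{i-t}\binom{i}{t}\prod_{c}\binom{k-i+t}{k-j_c}$. Applying the identity $(k-j_c)\binom{k-i+t}{k-j_c} = (k-i+t)\binom{k-1-i+t}{k-1-j_c}$ (from $r\binom{n}{r} = n\binom{n-1}{r-1}$) to each of the $l-2$ factors turns the target into
\begin{equation}
    \sum_{u=1}^{i}(-1)^{u+1}\binom{i}{u}\big[k^{l-2} - (k-u)^{l-2}\big]\prod_{c}\binom{k-1-u}{j_c-u} \geq 0,\notag
\end{equation}
after the substitution $u=i-t$. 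Factoring the bracket via the geometric-sum identity $k^{l-2}-(k-u)^{l-2} = u\sum_{r=0}^{l-3}k^{l-3-r}(k-u)^r$, using $u\binom{i}{u} = i\binom{i-1}{u-1}$, and reindexing by $v = u-1$ reduce the claim to showing that, for every $r \in \{0,\ldots,l-3\}$,
\begin{equation}
    A_r := \sum_{v=0}^{i-1}(-1)^{v}\binom{i-1}{v}(k-1-v)^{r}\prod_{c}\binom{k-2-v}{j_c-1-v} \geq 0.\notag
\end{equation}

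For the final step, the plan is to apply the identity $(K-v)\binom{K-1-v}{J_c-v} = (K-J_c)\binom{K-v}{J_c-v}$ (which follows from $(n+1)\binom{n}{r} = (n+1-r)\binom{n+1}{r}$) with $K := k-1$ and $J_c := j_c - 1$. Since $r \leq l-3 < l-2$, the $r$ copies of $(K-v)$ in $(K-v)^r$ can be distributed across any $r$ of the $l-2$ binomial factors, say the first $r$, converting $A_r$ into $\prod_{c\leq r}(k-j_c)$ (a positive quantity) times a single alternating sum in $v$. Re-running the generating-function computation from the first paragraph then expresses this remaining sum as
\begin{equation}
    \Big[\prod_{c\leq r}y_c^{k-j_c}\prod_{c>r}y_c^{k-j_c-1}\Big]\prod_{c\leq r}(1+y_c)^{k-i}\prod_{c>r}(1+y_c)^{k-i-1}\Big(\prod_{c}(1+y_c) - 1\Big)^{i-1},\notag
\end{equation}
a monomial coefficient in a product of polynomials whose factors $(1+y_c)^{k-i}$, $(1+y_c)^{k-i-1}$, and $\prod_{c}(1+y_c) - 1 = \sum_{\emptyset \neq S \subseteq [l-2]}\prod_{c\in S}y_c$ all have non-negative coefficients. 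Hence the coefficient is non-negative, giving $A_r \geq 0$ and closing the chain. The main obstacle will be the careful bookkeeping across this cascade of binomial and generating-function transformations; the key insight that makes the argument close is that the extra multiplicative weight $(k/(k-b))^{l-2}-1$ on the left-hand side can, after the reductions above, be absorbed into at most $l-3$ of the $l-2$ binomial product factors via the identity $(K-v)\binom{K-1-v}{J_c-v} = (K-J_c)\binom{K-v}{J_c-v}$, turning an apparently unstructured alternating sum into a manifestly non-negative polynomial coefficient.
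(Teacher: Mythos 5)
Your proposal is correct, and it takes a genuinely different route from the paper's own proof. Both arguments start identically, reducing the claim to the non-negativity of the alternating sum weighted by $\left(k/(k-b)\right)^{l-2}-1$, and both eventually exploit the geometric-sum factorization of a difference of $(l-2)$-th powers. From there the paper works term by term with binomial identities: it absorbs $b/(k-b)$ into $\binom{k}{b}\binom{k-b}{i-b}\binom{k-b}{j_1-b}$, shifts the index, and then collapses $\binom{i-1}{b}\binom{k-b-2}{j_1-b-1}$ via a Chu--Vandermonde step into a $b$-independent binomial times $(-1)^{-b}$, which cancels the alternating sign. You instead encode each alternating sum as a coefficient of an explicit multivariate polynomial, reduce the lemma to $P(k)\prod_c(k-j_c)\geq k^{l-2}P(k-1)$, and, after distributing the $r\leq l-3$ copies of $(k-1-v)$ over $r$ of the $l-2$ binomial factors, exhibit the residual sum as a coefficient of a product of $(1+y_c)$-powers and $\bigl(\prod_c(1+y_c)-1\bigr)^{i-1}$, all of which have non-negative coefficients; I have checked each of your identities and reductions and they hold. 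What your route buys is robustness: the sign cancellation is structural (coefficients of a non-negative-coefficient polynomial) rather than resting on a pointwise identity. This is significant here, because the paper's collapse step is fragile: the claim in $(ix)$ that all summands of $\sum_{b'+r'=j_1-1}\binom{i-1}{b'}\binom{j_1-k}{r'}$ other than $(b',r')=(b,r)$ vanish does not hold in general (e.g.\ $i=2$, $j_1=2$, $k=3$, $b=0$ gives $\binom{1}{0}\binom{1}{1}=1$ on the left of $(vi)$ but $\binom{0}{1}=0$ on the right), so your generating-function argument is not merely an alternative but the more reliable derivation. The price is heavier bookkeeping; when writing it up, state explicitly that the exponents $k-i$, $k-i-1$, $k-j_c$, $k-j_c-1$ are non-negative (they are, since $0\leq i\leq j_a<k$) and dispose of the trivial case $i=0$ separately.
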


\begin{proof}
    We first transform the statement as
    \begin{align}
                        & \ \sum_{b = 1}^{i} \left(-1\right)^{b + 1} \left(\frac{k}{k - b}\right)^{l-2} \binom{k}{b} \binom{k - b}{i - b} \prod_{c = 1}^{l - 2} \binom{k - b}{j_c - b} \ \geq\  \sum_{b = 1}^{i} \left(-1\right)^{b + 1} \binom{k}{b} \binom{k - b}{i - b} \prod_{c = 1}^{l - 2} \binom{k - b}{j_c - b} \\
        \Leftrightarrow & \ \sum_{b = 1}^{i} \left(-1\right)^{b + 1} \left( \left(\frac{k}{k - b}\right)^{l-2} - 1\right) \binom{k}{b} \binom{k - b}{i - b} \prod_{c = 1}^{l - 2} \binom{k - b}{j_c - b} \ \geq\  0 \text{.}
        \label{eq:transform0}
    \end{align}
    We then write the latter as
    \begin{align}
                           & \                                           
        \sum_{b = 1}^{i} \left(-1\right)^{b + 1}
        \underbrace{\left( \left(\frac{k}{k - b}\right)^{l-2} - 1\right)}_{(i)}
        \binom{k}{b} \binom{k - b}{i - b}
        \prod_{c = 1}^{l - 2} \binom{k - b}{j_c - b}                     \\
        =                  & \                                           
        \sum_{b = 1}^{i} \left(-1\right)^{b + 1}
        \left( \sum_{c = 0}^{l - 3} \left( \frac{k}{k - b} \right)^{c} \right)
        \frac{b}{k - b} \binom{k}{b} \binom{k - b}{i - b}
        \underbrace{\prod_{c = 1}^{l - 2} \binom{k - b}{j_c - b}}_{(ii)} \\
        =                  & \                                           
        \sum_{b = 1}^{i} \left(-1\right)^{b + 1}
        \left( \sum_{c = 0}^{l - 3} \left( \frac{k}{k - b} \right)^{c} \right)
        \underbrace{ \frac{b}{k - b} \binom{k}{b} \binom{k - b}{i - b} \binom{k - b}{j_1 - b} }_{(iii)}
        \prod_{c = 2}^{l - 2} \binom{k - b}{j_c - b}                     \\
        =                  & \                                           
        \sum_{b = 1}^{i} \left(-1\right)^{b + 1}
        \left( \sum_{c = 0}^{l - 3} \left( \frac{k}{k - b} \right)^{c} \right)
        \frac{i}{k - j_1} \binom{k}{i} \binom{i - 1}{b - 1} \binom{k - b - 1}{j_1 - b}
        \prod_{c = 2}^{l - 2} \binom{k - b}{j_c - b}                     \\
        \overset{(iv)}{=}  & \                                           
        \frac{i}{k - j_1} \binom{k}{i}
        \sum_{b = 1}^{i} \left(-1\right)^{b + 1}
        \left( \sum_{c = 0}^{l - 3} \left( \frac{k}{k - b} \right)^{c} \right)
        \binom{i - 1}{b - 1} \binom{k - b - 1}{j_1 - b}
        \prod_{c = 2}^{l - 2} \binom{k - b}{j_c - b}                     \\
        \overset{(v)}{=}   & \                                           
        \frac{i}{k - j_1} \binom{k}{i}
        \sum_{b = 0}^{i - 1} \left(-1\right)^{b}
        \left( \sum_{c = 0}^{l - 3} \left( \frac{k}{k - b - 1} \right)^{c} \right)
        \underbrace{ \binom{i - 1}{b} \binom{k - b - 2}{j_1 - b - 1} }_{(vi)}
        \prod_{c = 2}^{l - 2} \binom{k - b - 1}{j_c - b - 1}             \\
        =                  & \                                           
        \frac{i}{k - j_1} \binom{k}{i}
        \sum_{b = 0}^{i - 1} \left(-1\right)^{b}
        \left( \sum_{c = 0}^{l - 3} \left( \frac{k}{k - b - 1} \right)^{c} \right)
        \left(-1\right)^{-b} \binom{k - i - 1}{j_1 - 1}
        \prod_{c = 2}^{l - 2} \binom{k - b - 1}{j_c - b - 1}             \\
        \overset{(vii)}{=} & \                                           
        \frac{i}{k - j_1} \binom{k}{i} \binom{k - i - 1}{j_1 - 1}
        \sum_{b = 0}^{i - 1}
        \left( \sum_{c = 0}^{l - 3} \left( \frac{k}{k - b - 1} \right)^{c} \right)
        \prod_{c = 2}^{l - 2} \binom{k - b - 1}{j_c - b - 1} \geq 0 \text{,}
    \end{align}
    which shows the statement to be demonstrated as the last term is a sum of products whose factors are all non-negative.
    Recall that $0 \leq i \leq j_a < k$ for all $a \in \left[l - 2\right]$.
    In the following, we show that all transformations are sound.

    \begin{itemize}
        \item Using Theorem~\ref{ext:binom-theo}, $(i)$ holds as
              \begin{align}
                  \left(\frac{k}{k - b}\right)^{l-2} - 1
                   & = \left( \left(\frac{k}{k - b}\right)^{l-2} - \left(\frac{k}{k - b}\right)^{l-3} \right) + \left( \left(\frac{k}{k - b}\right)^{l-3} - \left(\frac{k}{k - b}\right)^{l-4} \right) \\
                   & \qquad + \cdots + \left( \frac{k}{k - b} - 1 \right)                                                                                                                              \\
                   & = \left( \frac{k}{k - b} - 1 \right) \left( \left(\frac{k}{k - b}\right)^{l-3} + \left(\frac{k}{k - b}\right)^{l-4} + \cdots + 1 \right)                                          \\
                   & = \left( \frac{k}{k - b} - 1 \right) \sum_{c = 0}^{l - 3} \left( \frac{k}{k - b} \right)^{c}
                  = \frac{b}{k - b} \sum_{c = 0}^{l - 3} \left( \frac{k}{k - b} \right)^{c} \text{.}
                  \label{eq:transform1}
              \end{align}

        \item In~$(ii)$, we extract the first factor of the product such that it starts with $c=2$.

        \item Subsequently, we transform the binomial coefficients~$(iii)$ as
              \begin{align}
                                                                  & \ \frac{b}{k - b} \binom{k}{b} \binom{k - b}{i - b} \binom{k - b}{j_1 - b}                                                                                                            \\
                  \overset{\text{\eqref{ext:binom-transform}}}{=} & \ \frac{b}{k - b} \binom{k}{i} \binom{i}{b} \binom{k - b}{j_1 - b}                                                                                                                    \\
                  \overset{\text{\eqref{ext:binom-coeff}}}{=}     & \ \frac{b}{k - b} \frac{i!}{b! \left(i-b\right)!} \frac{\left(k - b\right)!}{\left(j_1 - b\right)! \left(k - j_1\right)!} \binom{k}{i}                                                \\
                  \overset{(viii)}{=}                             & \ \frac{i}{k - j_1} \frac{\left(i - 1\right)!}{\left(b - 1\right)! \left(i - b\right)!} \frac{\left(k - b - 1\right)!}{\left(j_1 - b\right)! \left(k - j_1 - 1\right)!}  \binom{k}{i} \\
                  \overset{\text{\eqref{ext:binom-coeff}}}{=}     & \ \frac{i}{k - j_1} \binom{k}{i} \binom{i - 1}{b - 1} \binom{k - b - 1}{j_1 - b} \text{,}
                  \label{eq:transform2}
              \end{align}
              where $(viii)$ rearranges terms.

        \item Then, in~$(iv)$, we move terms that are independent of $b$ outside of the summation.

        \item In~$(v)$, we shift the summation to start at $b = 0$.

        \item Finally, by defining $r = j_1 - b - 1 \in \mathbb{N}_0$, we express~$(vi)$ as
              \begin{align}
                  \binom{i - 1}{b} \binom{k - b - 2}{j_1 - b - 1} = & \ \binom{i - 1}{b} \binom{r + k - j_1 - 1}{r}                                        \\
                  \overset{\text{(\ref{ext:binom-minus-one})}}{=}   & \ \left(-1\right)^{r} \binom{i - 1}{b} \binom{j_1 - k}{r}                            \\
                  \overset{(ix)}{=}                                 & \  \left(-1\right)^{r} \sum_{\substack{b'\!,\, r' \in\, \mathbb{N}_0                 \\ b' + r' = j_1 - 1}} \binom{i - 1}{b'} \binom{j_1 - k}{r'} \\
                  \overset{\text{(\ref{ext:vandermonde})}}{=}       & \ \left(-1\right)^{j_1 - b - 1} \binom{i + j_1 - k - 1}{j_1 - 1}                     \\
                  \overset{\text{(\ref{ext:binom-minus-one})}}{=}   & \ \left(-1\right)^{j_1 - b - 1} \left(-1\right)^{j_1 - 1} \binom{k - i - 1}{j_1 - 1} \\
                  =                                                 & \ \left(-1\right)^{-b} \binom{k - i - 1}{j_1 - 1} \text{.}
                  \label{eq:transform4}
              \end{align}
              Thereby, $(ix)$ holds as all summands that are introduced by $\sum_{b' + r' = j_1 - 1} \binom{i-1}{b'} \binom{j_1 - k}{r'}$ are zero.
              The only non-zero summand is for $b' = b$ and $r' = r$, that is, $b + r = j_1 - 1$.
              In general, $\binom{m}{n} = 0$ if $0 \leq m < n$ for $m \in \mathbb{R}$ and $n \in \mathbb{N}_0$.
    \end{itemize}
\end{proof}

\begin{figure}[tb]
    \begin{center}
        \centerline{\includegraphics[width=0.8\linewidth]{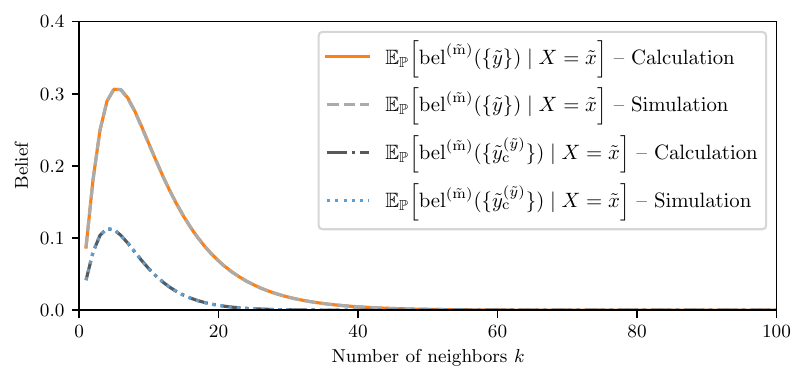}}
        \caption{Comparison of simulating and calculating the expected belief of the true label $\tilde{y}$ and of the most frequently co-occurring label $\tilde{y}_{\operatorname{c}}$ for $k \in \left[100\right]$, $l = 3$, $p_1 = 0.4$, $p_2 = 0.35$, and $p_3 = 0.25$.}
        \label{fig:simulation-belief}
    \end{center}
\end{figure}

\section{External Theorems}
\label{sec:common-id}
This section collects identities and theorems that we use in the proofs of our statements.

\begin{definition}
    \textnormal{(Binomial coefficients; \citealt[p.~154]{GKP1994}).}
    When both arguments of a binomial coefficient are natural, one defines
    \begin{equation}
        \binom{n}{k} = \frac{n!}{k! \left(n - k\right)!} \text{,}
    \end{equation}
    for $n, k \in \mathbb{N}_0$ with $n \geq k \geq 0$.
    One extends this definition using falling factorials $r^{\underline{m}} := \prod_{i = 0}^{m - 1} \left(r - i\right)$ for any $r \in \mathbb{R}$ and $m \in \mathbb{N}_0$.
    Binomial coefficients are then expressed as
    \begin{equation}
        \binom{r}{m} = \frac{r^{\underline{m}}}{m!} \text{,}
    \end{equation}
    which allows for real arguments $r$.
    \label{ext:binom-coeff}
\end{definition}

\begin{theorem}
    \textnormal{(Binomial theorem; \citealt[p.~162]{GKP1994}).}
    Given $x, y \in \mathbb{R}$ and $n \in \mathbb{N}_0$, the binomial theorem states
    \begin{equation}
        \left(x + y\right)^{n} = \sum_{k = 0}^{n} \binom{n}{k} x^{n - k} y^{k} \text{.}
    \end{equation}
    \label{ext:binom-theo}
\end{theorem}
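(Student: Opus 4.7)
The plan is to prove the binomial theorem by induction on $n \in \mathbb{N}_0$. The base case $n = 0$ is immediate since $(x+y)^0 = 1$ and the right-hand side collapses to the single term $\binom{0}{0} x^0 y^0 = 1$. For the inductive step, assuming the identity holds for some $n$, I would write $(x+y)^{n+1} = (x+y)(x+y)^n$ and insert the induction hypothesis to expand $(x+y)^n$ as a sum over $k$.

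After distributing the additional factor $(x+y)$ across the sum, two sums result: $\sum_{k=0}^{n} \binom{n}{k} x^{n+1-k} y^k$ coming from the $x$-multiplication and $\sum_{k=0}^{n} \binom{n}{k} x^{n-k} y^{k+1}$ coming from the $y$-multiplication. Reindexing the second sum via $k \mapsto k-1$ yields $\sum_{k=1}^{n+1} \binom{n}{k-1} x^{n+1-k} y^k$. Isolating the boundary contributions $x^{n+1}$ (from $k=0$ in the first sum) and $y^{n+1}$ (from $k=n+1$ in the reindexed second sum), the remaining middle range $1 \leq k \leq n$ has coefficient $\binom{n}{k} + \binom{n}{k-1}$.

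The key algebraic ingredient is Pascal's rule, $\binom{n}{k} + \binom{n}{k-1} = \binom{n+1}{k}$, which follows directly from Definition~\ref{ext:binom-coeff} via a short common-denominator calculation: $\frac{n!}{k!(n-k)!} + \frac{n!}{(k-1)!(n-k+1)!} = \frac{n!(n-k+1) + n!\,k}{k!(n-k+1)!} = \frac{(n+1)!}{k!(n+1-k)!}$. Applying Pascal's rule collapses the middle terms to $\binom{n+1}{k} x^{n+1-k} y^k$, and combined with the isolated boundary terms $\binom{n+1}{0} x^{n+1}$ and $\binom{n+1}{n+1} y^{n+1}$, the whole expression reassembles as $\sum_{k=0}^{n+1} \binom{n+1}{k} x^{n+1-k} y^k$, completing the induction.

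I do not anticipate any serious obstacle since this is a classical textbook proof; the only delicate bookkeeping lies in the reindexing of the second sum and in making sure the boundary cases $k=0$ and $k=n+1$ are handled explicitly so that Pascal's rule is only invoked on the interior indices where it applies.
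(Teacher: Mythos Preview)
Your induction argument is correct and is the classical proof of the binomial theorem. However, there is nothing to compare against here: in the paper this statement appears in the ``External Theorems'' section and is simply cited from \citet[p.~162]{GKP1994} without proof, so the paper does not supply its own argument.
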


\begin{theorem}
    \textnormal{(\citealt[p.~164]{GKP1994}).}
    As a special case of the binomial theorem, one obtains
    \begin{equation}
        \left(-1\right)^{k} \binom{-r}{k} = \binom{r + k - 1}{k} \text{,}
    \end{equation}
    for $r \in \mathbb{R}$ and $k \in \mathbb{N}_0$.
    \label{ext:binom-minus-one}
\end{theorem}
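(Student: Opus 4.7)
The plan is to prove the identity directly from the falling-factorial definition of the binomial coefficient given in Definition~\ref{ext:binom-coeff}, since the left-hand side $\binom{-r}{k}$ has a negative upper argument for which the standard factorial formula is not defined. The core observation is that pulling a factor of $-1$ out of each of the $k$ terms of the falling factorial $(-r)^{\underline{k}}$ reverses the sign pattern so that the remaining product equals a rising factorial in $r$, which can then be rewritten as a falling factorial based at $r+k-1$.

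Concretely, I would expand $\binom{-r}{k} = (-r)^{\underline{k}}/k!$ as a product of $k$ linear factors of the form $(-r - i)$ for $i = 0, \dots, k-1$, then factor $-1$ out of each to obtain $(-1)^{k}$ times the product $r(r+1)\cdots(r+k-1)$. Reversing the order of this product identifies it as $(r+k-1)^{\underline{k}}$, so dividing by $k!$ yields $\binom{-r}{k} = (-1)^{k}\binom{r+k-1}{k}$. Multiplying both sides by $(-1)^{k}$ and using $(-1)^{2k}=1$ delivers the stated identity.

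There is no substantive obstacle; the argument is a one-line manipulation once the correct definition is in play. The only point that warrants care is that $r$ is an arbitrary real number, so one must work throughout with the extended binomial coefficient from Definition~\ref{ext:binom-coeff} rather than the factorial formula, which requires nonnegative integer arguments. The framing in the surrounding text as a "special case of the binomial theorem" refers to an alternative derivation via comparing coefficients in the two expansions of $(1-x)^{-r}$ using the generalized binomial series, but the falling-factorial computation above is self-contained and avoids any appeal to power series or convergence.
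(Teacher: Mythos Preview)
Your argument is correct: expanding $(-r)^{\underline{k}}$ via Definition~\ref{ext:binom-coeff}, factoring out $(-1)$ from each of the $k$ linear factors, and recognizing the resulting rising factorial as $(r+k-1)^{\underline{k}}$ is exactly the standard derivation of this identity.

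There is nothing to compare against, however, because the paper does not supply its own proof of this statement. It appears in Appendix~\ref{sec:common-id} (``External Theorems''), where the identity is simply quoted from \citet[p.~164]{GKP1994} as a known fact to be used elsewhere. Your falling-factorial computation is precisely the proof given in that reference, so you have correctly reconstructed the intended justification.
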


\begin{theorem}
    \textnormal{(\citealt[p.~168]{GKP1994}).}
    Given $r \in \mathbb{R}$ and $m, k \in \mathbb{N}_0$ with $m \geq k$, it holds that
    \begin{equation}
        \binom{r}{m} \binom{m}{k} = \binom{r}{k} \binom{r - k}{m - k} \text{.}
    \end{equation}
    \label{ext:binom-transform}
\end{theorem}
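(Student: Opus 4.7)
The plan is to prove the identity as a direct algebraic calculation starting from the falling-factorial definition of the binomial coefficient given in Definition~\ref{ext:binom-coeff}, namely $\binom{r}{m} = r^{\underline{m}}/m!$ with $r^{\underline{m}} = \prod_{i=0}^{m-1}(r-i)$. Working with falling factorials (rather than ratios of factorials) has the advantage that the identity is established for all $r \in \mathbb{R}$ simultaneously and not only for natural $r$, matching the statement's full generality; no separate combinatorial argument is then needed.

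The key step is the splitting identity
\begin{equation}
    r^{\underline{m}} \,=\, r^{\underline{k}}\,(r-k)^{\underline{m-k}} \quad \text{for } 0 \leq k \leq m,
\end{equation}
which is immediate from partitioning the index set of the product $\prod_{i=0}^{m-1}(r-i)$ at $i = k$: the first $k$ factors contribute $r^{\underline{k}}$, while the remaining $m-k$ factors, after the substitution $j = i - k$, contribute $(r-k)^{\underline{m-k}}$. Given this, I would expand the left-hand side as
\begin{equation}
    \binom{r}{m}\binom{m}{k} \,=\, \frac{r^{\underline{m}}}{m!}\cdot\frac{m^{\underline{k}}}{k!} \,=\, \frac{r^{\underline{m}}}{k!\,(m-k)!},
\end{equation}
using $m^{\underline{k}}/m! = 1/(m-k)!$, substitute the splitting identity, and regroup as $\bigl(r^{\underline{k}}/k!\bigr)\cdot\bigl((r-k)^{\underline{m-k}}/(m-k)!\bigr)$, which is exactly $\binom{r}{k}\binom{r-k}{m-k}$.

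No serious obstacle is anticipated; this is a classical textbook identity (attributed here to Graham, Knuth, and Patashnik). The only verification worth mentioning is the splitting of the falling factorial, which is a one-line consequence of the product definition and is a polynomial identity in $r$, hence valid on all of $\mathbb{R}$. An alternative combinatorial proof for the case $r \in \mathbb{N}_0$ proceeds by double-counting the number of ways to choose a $k$-subset nested inside an $m$-subset of an $r$-element set, but I would prefer the algebraic route because it handles the real-valued case in the same breath and leaves no boundary cases to treat separately.
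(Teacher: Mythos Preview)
Your proof is correct. Note, however, that the paper does not supply its own proof of this statement: it appears in Appendix~\ref{sec:common-id} (``External Theorems'') as a cited identity from \citet[p.~168]{GKP1994}, listed without argument. So there is no in-paper proof to compare against; your falling-factorial derivation is the standard textbook argument and matches what one finds in the cited source.
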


\begin{theorem}
    \textnormal{(Chu-Vandermonde identity; \citealt[p.~169]{GKP1994}).}
    Given $s, t \in \mathbb{R}$ and $n \in \mathbb{N}_0$, the Chu-Vandermonde identity states
    \begin{equation}
        \binom{s + t}{n} = \sum_{k = 0}^{n} \binom{s}{k} \binom{t}{n - k} \text{.}
    \end{equation}
    \label{ext:vandermonde}
\end{theorem}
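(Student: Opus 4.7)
The plan is to prove the Chu-Vandermonde identity first for natural-number arguments using the binomial theorem (Theorem~\ref{ext:binom-theo}), then lift to real arguments by a standard polynomial-identity argument based on the falling-factorial form of the generalized binomial coefficient (Definition~\ref{ext:binom-coeff}).

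First I would fix $n \in \mathbb{N}_0$ and consider $s, t \in \mathbb{N}_0$. Apply the binomial theorem to the factorization
\begin{equation}
    (1 + x)^{s + t} = (1 + x)^{s} \cdot (1 + x)^{t}
\end{equation}
and compare coefficients of $x^n$ on both sides. On the left, Theorem~\ref{ext:binom-theo} gives coefficient $\binom{s+t}{n}$. On the right, expanding each factor by Theorem~\ref{ext:binom-theo} and computing the Cauchy product yields
\begin{equation}
    [x^n]\, (1+x)^{s}(1+x)^{t} = \sum_{k = 0}^{n} \binom{s}{k} \binom{t}{n - k} \text{,}
\end{equation}
which establishes the identity whenever $s, t \in \mathbb{N}_0$.

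Next I would extend to real arguments in two stages using the polynomial identity principle. Fix $n \in \mathbb{N}_0$ and $t \in \mathbb{N}_0$. By Definition~\ref{ext:binom-coeff}, $\binom{s}{k} = s^{\underline{k}}/k!$ is a polynomial in $s$ of degree $k$, so both sides of the claimed identity are polynomials in $s$ of degree at most $n$. Since they agree on the infinite set $\mathbb{N}_0$ by the first step, they must coincide as polynomials, hence for every $s \in \mathbb{R}$. Now fix any such $s \in \mathbb{R}$ and $n \in \mathbb{N}_0$, and view both sides as polynomials in $t$ of degree at most $n$; they agree on all $t \in \mathbb{N}_0$ by what was just shown, and the same polynomial-identity argument upgrades this to all $t \in \mathbb{R}$.

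The hard part is really only bookkeeping: one must make sure that after the first extension the polynomial degree bound in $t$ still holds and is independent of $s$, so that the identity-theorem step goes through. Once this is checked against Definition~\ref{ext:binom-coeff}, no further calculation is required.
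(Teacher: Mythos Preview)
The paper does not supply a proof of this statement: it appears in Appendix~C (``External Theorems'') as a result quoted from \citet[p.~169]{GKP1994} and is invoked without argument in the proof of Lemma~\ref{lemma:aux}. There is therefore nothing in the paper to compare your proof against.

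That said, your argument is correct and is essentially the standard one. The coefficient comparison in $(1+x)^{s+t}=(1+x)^s(1+x)^t$ via Theorem~\ref{ext:binom-theo} gives the identity for $s,t\in\mathbb{N}_0$, and the two-stage polynomial-identity extension is sound: by Definition~\ref{ext:binom-coeff}, $\binom{s+t}{n}$ is a polynomial of degree $n$ in $s$ (resp.\ $t$) for fixed $t$ (resp.\ $s$), and on the right the term $\binom{s}{k}\binom{t}{n-k}$ has degree $k$ in $s$ and $n-k$ in $t$, so the sum has degree at most $n$ in either variable regardless of the value of the other. The bookkeeping you flag is thus immediate from the falling-factorial definition, and no gap remains.
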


\section{Additional Experiments}
\label{sec:add-exp}
This section contains additional experiments and details their hyperparameters.
Section~\ref{sec:appl-assumpt} provides experiments justifying Assumption~\ref{def:assumptions}, Section~\ref{sec:hyperparams} lists the values of all relevant hyperparameters, the parameter sensitivity of our approach is discussed in Section~\ref{sec:parameter-sens}, and Section~\ref{sec:app-tradeoff-curves} contains reject trade-off plots for all experimental settings considered.

\subsection{Applicability of Assumption~\ref{def:assumptions}}
\label{sec:appl-assumpt}
Given a fixed instance $\tilde{x}$, Assumption~\ref{def:assumptions} describes the label distribution of the neighbors' candidate sets.
Thereby, we assume that the true label $\tilde{y}$ dominates the neighborhood and that label $\tilde{y}_{\operatorname{c}}$ co-occurs most often with the true label $\tilde{y}$.
Otherwise, we assume uniform noise among the remaining noise labels.

Figure~\ref{fig:coocc-labels} demonstrates that those assumptions are largely satisfied on the four real-world datasets (see Section~\ref{sec:exp-results}).
In most cases, the true label $\tilde{y}$ occurs most often in the neighborhood of instance $\tilde{x}$.
There are one or two other labels with which $\tilde{y}$ is commonly confused, which we model by $\tilde{y}_{\operatorname{c}}$.
Apart from that, all remaining noise labels are distributed uniformly.

\begin{figure}[tp]
    \centering
    \begin{subfigure}{0.3\linewidth}
        \includegraphics[height=4cm]{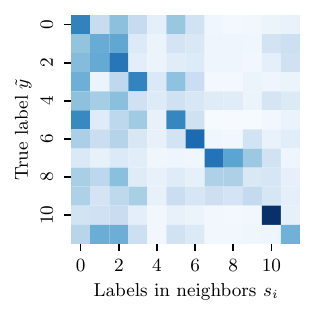}
        \caption{Dataset \emph{bird-song}.}
        \label{fig:bird-heatmap}
    \end{subfigure}
    \begin{subfigure}{0.3\linewidth}
        \includegraphics[height=4cm]{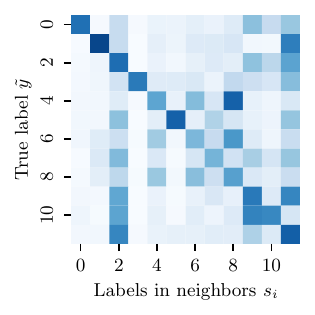}
        \caption{Dataset \emph{mir-flickr}.}
        \label{fig:flickr-heatmap}
    \end{subfigure}
    \\
    \begin{subfigure}{0.3\linewidth}
        \includegraphics[height=4cm]{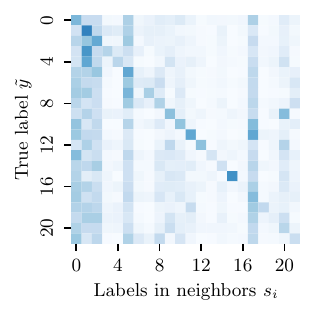}
        \caption{Dataset \emph{msrc-v2}.}
        \label{fig:msrc-heatmap}
    \end{subfigure}
    \begin{subfigure}{0.3\linewidth}
        \includegraphics[height=4.1cm]{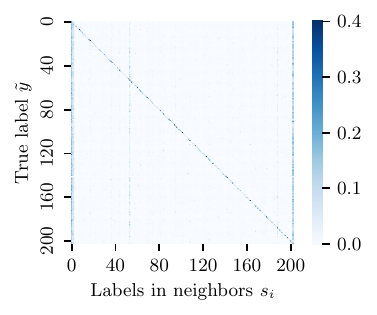}
        \caption{Dataset \emph{yahoo-news}.}
        \label{fig:yahoo-heatmap}
    \end{subfigure}
    \caption{
        Counts of the class labels in the candidate sets $s_i$ of instance $\tilde{x}$'s 10-nearest neighbors $x_i$.
    }
    \label{fig:coocc-labels}
\end{figure}

\subsection{Hyperparameters}
\label{sec:hyperparams}
As mentioned in Section~\ref{sec:competitors}, we consider ten commonly used PLL approaches.
We choose their parameters as recommended by the respective authors.
\begin{itemize}
    \item \textsc{Pl-Knn} \citep{HullermeierB06}: For all non-\emph{MNIST} datasets, we use $k = 10$ neighbors as recommended by the authors.
          For the \emph{MNIST} datasets, we use the hidden representation of a variational auto-encoder as instance features and use $k = 20$.
          The variational auto-encoder has a 768-dimensional input layer (flat \emph{MNIST} input), a 512-dimensional second layer, and 48-dimensional bottleneck layers for the mean and variance representations.
          The decoder uses a 48-dimensional first layer, a 512-dimensional second layer, and a 768-dimensional output layer with sigmoid activation.
          Otherwise, we use ReLU activations between all layers.
          Binary cross-entropy is used as a reconstruction loss.
          We choose the \emph{AdamW} optimizer for training.
    \item \textsc{Pl-Svm} \citep{NguyenC08}: We use the \textsc{Pegasos} optimizer \citep{Shalev-ShwartzSS07} and $\lambda = 1$.
    \item \textsc{Ipal} \citep{ZhangY15a}: We use $k = 10$ neighbors, $\alpha = 0.95$, and 100 iterations.
    \item \textsc{Pl-Ecoc} \citep{ZhangYT17}: We use $L = \lceil 10 \log_2(l) \rceil$ and $\tau = 0.1$ as recommended.
    \item \textsc{Proden} \citep{LvXF0GS20}: For a fair comparison, we use the same base models for all neural-network-based approaches.
          We use a standard $d$-300-300-300-$l$ MLP \citep{werbos1974beyond} for the non-\emph{MNIST} datasets with ReLU activations, batch normalizations, and softmax output.
          For the \emph{MNIST} datasets, we use the LeNet-5 architecture~\citep{LeCunBBH98}.
          We choose the \emph{Adam} optimizer for training.
    \item \textsc{Cc} \citep{FengL0X0G0S20}: We use the same base models as mentioned above for \textsc{Proden}.
    \item \textsc{Valen} \citep{XuQGZ21}: We use the same base models as mentioned above for \textsc{Proden}.
    \item \textsc{Pop} \citep{0009LLQG23}: We use the same base models as mentioned above for \textsc{Proden}.
          Also, we set $e_0 = 0.001$, $e_{end} = 0.04$, and $e_s = 0.001$.
          We abstain from using the data augmentations discussed in the paper for a fair comparison.
    \item \textsc{CroSel} \citep{crosel2024}: We use the same base models as mentioned above for \textsc{Proden}. We use 10 warm-up epochs using \textsc{Cc} and $\lambda_{cr} = 2$.
          We abstain from using the data augmentations discussed in the paper for a fair comparison.
    \item \textsc{Dst-Pll} (our proposed approach): Similar to \textsc{Pl-Knn} and \textsc{Ipal}, we use $k = 10$ neighbors for the non-\emph{MNIST} datasets.
          For the \emph{MNIST} datasets, we use the hidden representation of a variational auto-encoder as instance features and use $k = 20$.
          The architecture of the variational auto-encoder is the same as described above for \textsc{Pl-Knn}.
\end{itemize}
We have implemented all approaches in \textsc{Python} using the \textsc{Pytorch} library.
All experiments need two to three days on a machine with 48 cores and one NVIDIA GeForce RTX 3090.

\subsection{Parameter Sensitivity}
\label{sec:parameter-sens}

\begin{figure}[tp]
    \centering
    \includegraphics[width=\linewidth]{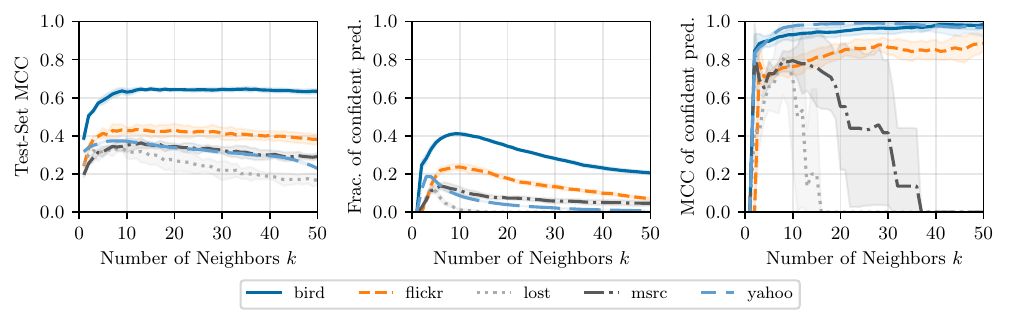}
    \caption{Sensitivity of $k$ regarding the test-set MCC score, the fraction, and the MCC score of confident / non-rejected predictions.}
    \label{fig:real-data-sensitivity}
\end{figure}

Figure~\ref{fig:real-data-sensitivity} shows the sensitivity of the number of neighbors $k$ regarding the test-set performance, the fraction of confident / non-rejected predictions, and the non-rejected prediction performance.
The shaded areas indicate the standard deviation regarding the 5-fold cross-validation.
As for default $k$-nearest neighbor classification, changes of $k$ have a relatively large impact.
We show parameter sensitivity for each of the real-world datasets separately.
Naturally, different datasets have different optimal parameter settings.
The configuration $k = 10$, which is also recommended within \textsc{Pl-Knn} \citep{HullermeierB06} and \textsc{Ipal} \citep{ZhangY15a}, provides a good trade-off between the number of confident predictions and how accurate confident predictions are.
Indeed, this setting produces a good number of confident predictions on most datasets (Figure~\ref{fig:real-data-sensitivity}; center plot).
At the same time, it produces a good MCC performance of confident predictions on most datasets (Figure~\ref{fig:real-data-sensitivity}; right plot).

When increasing $k$, our method's behavior on different datasets can be assembled into two groups.
On the \emph{bird-song}, \emph{mir-flickr}, and \emph{yahoo-news} datasets,
increasing $k$ past ten neighbors also increases the amount of irrelevant labeling information from those neighbors.
Therefore, our approach produces less confident predictions.
At the same time, the MCC score of confident predictions remains at roughly the same level.
This is because irrelevant labeling information from neighbors increases at most at the same rate as $k$.
In contrast, on the \emph{lost} and \emph{msrc-v2} datasets, the MCC score of confident predictions drops sharply at a certain point while the number of confident predictions decreases similarly.
This is because irrelevant labeling information increases more rapidly than $k$: The decrease of confidence in predictions is slower than the increase of irrelevant candidate labels.

\subsection{Reject Trade-off Curves}
\label{sec:app-tradeoff-curves}
Figure~\ref{fig:reject-tradeoff-all} (i)\,--\,(xxxiv) shows the reject trade-off for varying confidence (0 to 1) and $\Delta_{\predbpa}$ (-1 to 1) thresholds and augments Figure~\ref{fig:reject-tradeoff} by considering all datasets and noise generation strategies.
The x-axes show the fractions of predictions that are rejected.
The y-axes show the accuracies of predictions that are not rejected.
The plots show (fraction of rejects, non-rejected test-set accuracy)-pairs corresponding to different settings of the thresholds.
In most cases, our method provides a better trade-off between the number of rejected predictions and the accuracy of the non-rejected predictions.
Table~\ref{tab:risk-tradeoff} summarizes all plots by showing the average empirical risks across all experimental settings and for different trade-off parameters $\lambda$.
We recall that our method provides the significantly best trade-offs for $\lambda \in [0, 0.2]$.

\begin{figure}[p]
    \vspace{-2em}
    \centering
    \small
    \hspace{-0.03cm}\includegraphics{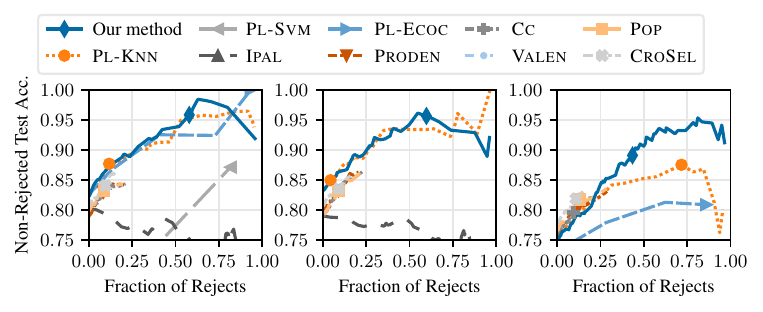}
    \\[-0.5em] \hspace{0.4cm} (i) \emph{ecoli} (uniform noise) \hspace{0.2cm} (ii) \emph{ecoli} (class-dep. noise) \hspace{0.2cm} (iii) \emph{ecoli} (inst.-dep. noise)~
\end{figure}
\begin{figure}[p]
    \vspace{-2em}
    \centering
    \small
    \hspace{-0.03cm}\includegraphics{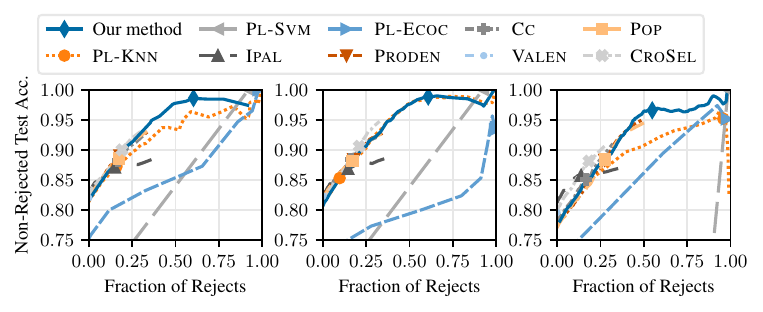}
    \\[-0.5em] \hspace{0.4cm} (iv) \emph{m.-feats} (uniform noise) \hspace{0.2cm} (v) \emph{m.-feats} (class-dep. noise) \hspace{0.2cm} (vi) \emph{m.-feats} (inst.-dep. noise)~
\end{figure}
\begin{figure}[p]
    \vspace{-2em}
    \centering
    \small
    \hspace{-0.03cm}\includegraphics{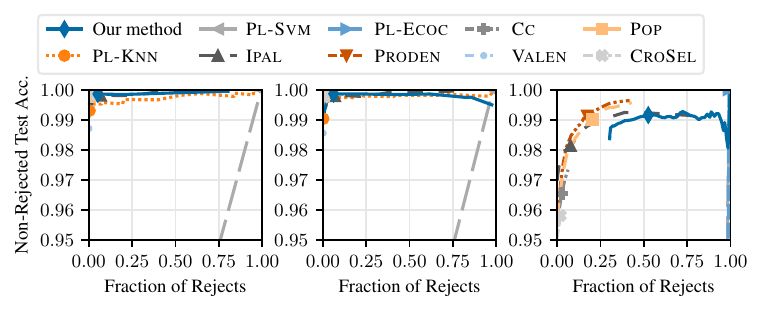}
    \\[-0.5em] \hspace{0.4cm} (vii) \emph{pen-digits} (uniform noise) \hspace{0.2cm} (viii) \emph{pen-digits} (class-dep. noise) \hspace{0.2cm} (ix) \emph{pen-digits} (inst.-dep. noise)~
\end{figure}
\begin{figure}[p]
    \vspace{-2em}
    \centering
    \small
    \hspace{-0.03cm}\includegraphics{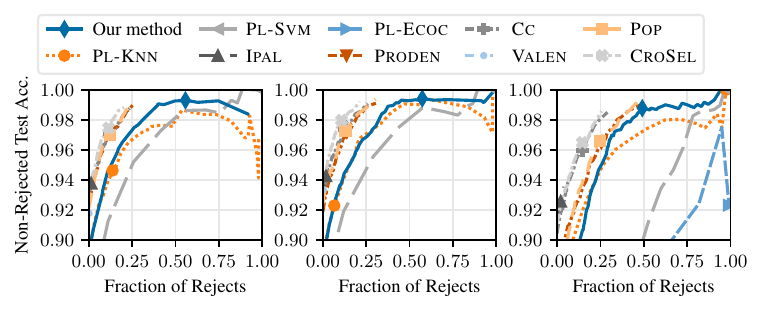}
    \\[-0.5em] \hspace{0.4cm} (x) \emph{semeion} (uniform noise) \hspace{0.2cm} (xi) \emph{semeion} (class-dep. noise) \hspace{0.2cm} (xii) \emph{semeion} (inst.-dep. noise)~
\end{figure}
\begin{figure}[p]
    \vspace{-2em}
    \centering
    \small
    \hspace{-0.03cm}\includegraphics{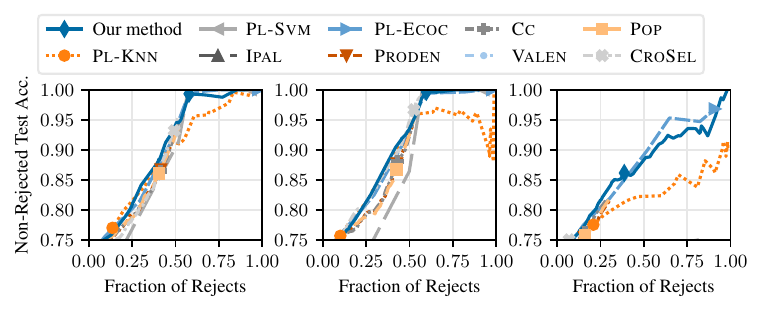}
    \\[-0.5em] \hspace{0.4cm} (xiii) \emph{solar-flare} (uniform noise) \hspace{0.2cm} (xiv) \emph{solar-flare} (class-dep. noise) \hspace{0.2cm} (xv) \emph{solar-flare} (inst.-dep. noise)~
\end{figure}
\begin{figure}[p]
    \vspace{-2em}
    \centering
    \small
    \hspace{-0.03cm}\includegraphics{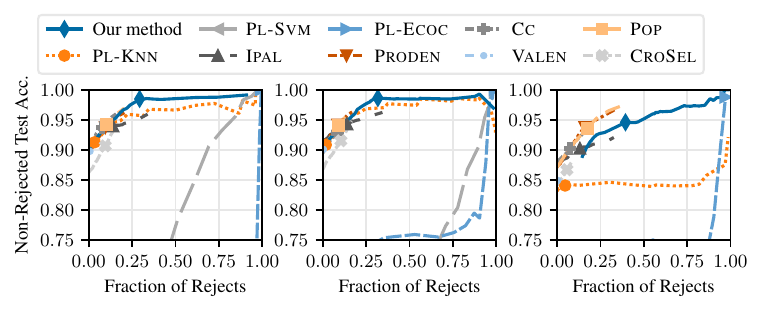}
    \\[-0.5em] \hspace{0.4cm} (xvi) \emph{stat.-landsat} (uniform noise) \hspace{0.2cm} (xvii) \emph{stat.-landsat} (class-dep. noise) \hspace{0.2cm} (xviii) \emph{stat.-landsat} (inst.-dep. noise)~
\end{figure}
\begin{figure}[p]
    \vspace{-2em}
    \centering
    \small
    \hspace{-0.03cm}\includegraphics{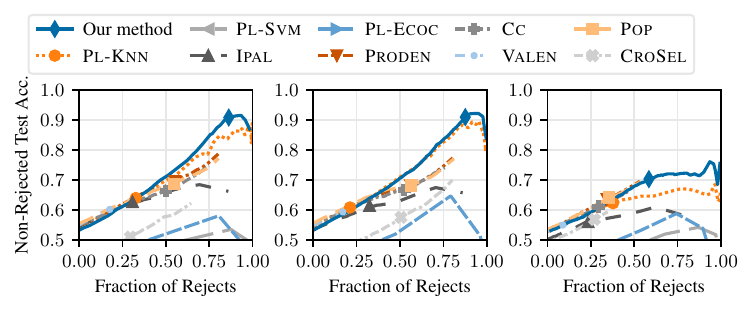}
    \\[-0.5em] \hspace{0.4cm} (xix) \emph{theorem} (uniform noise) \hspace{0.2cm} (xx) \emph{theorem} (class-dep. noise) \hspace{0.2cm} (xxi) \emph{theorem} (inst.-dep. noise)~
\end{figure}
\begin{figure}[p]
    \vspace{-2em}
    \centering
    \small
    \hspace{-0.03cm}\includegraphics{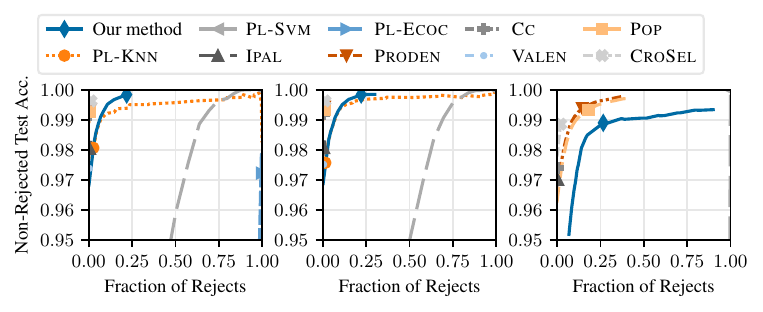}
    \\[-0.5em] \hspace{0.4cm} (xxii) \emph{MNIST} (uniform noise) \hspace{0.2cm} (xxiii) \emph{MNIST} (class-dep. noise) \hspace{0.2cm} (xxiv) \emph{MNIST} (inst.-dep. noise)~
\end{figure}
\begin{figure}[p]
    \vspace{-2em}
    \centering
    \small
    \hspace{-0.03cm}\includegraphics{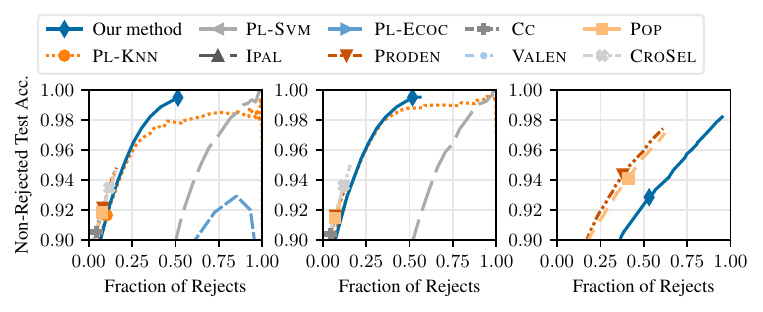}
    \\[-0.5em] \hspace{0.4cm} (xxv) \emph{FMNIST} (uniform noise) \hspace{0.2cm} (xxvi) \emph{FMNIST} (class-dep. noise) \hspace{0.2cm} (xxvii) \emph{FMNIST} (inst.-dep. noise)~
\end{figure}
\begin{figure}[p]
    \vspace{-2em}
    \centering
    \small
    \hspace{-0.03cm}\includegraphics{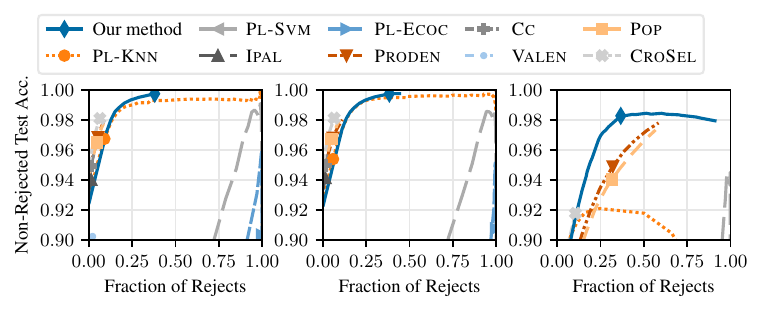}
    \\[-0.5em] \hspace{0.4cm} (xxviii) \emph{KMNIST} (uniform noise) \hspace{0.2cm} (xxix) \emph{KMNIST} (class-dep. noise) \hspace{0.2cm} (xxx) \emph{KMNIST} (inst.-dep. noise)~
\end{figure}
\begin{figure}[p]
    \vspace{-2em}
    \centering
    \small
    \hspace{-0.03cm}\includegraphics{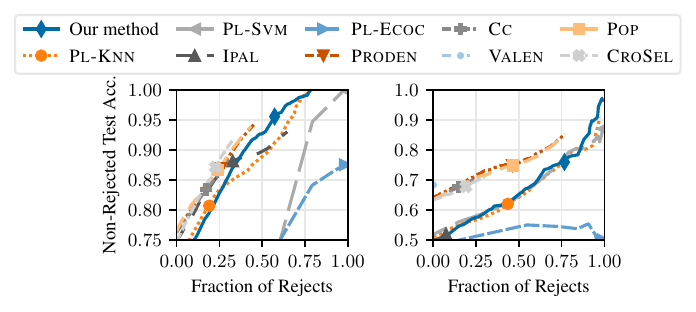}
    \\[-0.5em] \hspace{0.8cm} (xxxi) \emph{bird-song} \hspace{1.8cm} (xxxii) \emph{mir-flickr}~
\end{figure}
\begin{figure}[p]
    \vspace{-2em}
    \centering
    \small
    \hspace{-0.03cm}\includegraphics{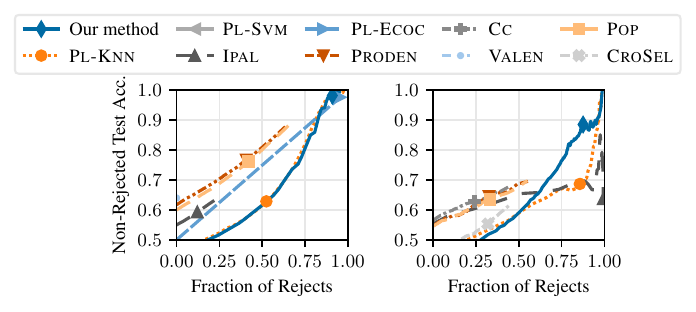}
    \\[-0.5em] \hspace{0.8cm} (xxxiii) \emph{yahoo-news} \hspace{1.8cm} (xxxiv) \emph{msrc-v2}~
    \caption{
        Trade-off between the fraction of rejected predictions and the accuracy of non-rejected predictions.
    }
    \vspace{-1em}
    \label{fig:reject-tradeoff-all}
\end{figure}

\end{document}